\title{Kernelized Normalizing Constant Estimation: \\ Bridging  Bayesian Quadrature and Bayesian Optimization}
\author{
    Xu Cai\textsuperscript{\rm 1},
    Jonathan Scarlett\textsuperscript{\rm 1,2}
}
\theoremstyle{plain}
\theoremstyle{plain}
\theoremstyle{plain}
\newtheorem{lem}{\protect\lemmaname}
\theoremstyle{plain}
\newtheorem{thm}{\protect\theoremname}
\theoremstyle{plain}
\theoremstyle{definition}
\theoremstyle{definition}
\theoremstyle{definition}
\providecommand{\claimname}{Claim}
\providecommand{\lemmaname}{Lemma}
\providecommand{\propositionname}{Proposition}
\providecommand{\theoremname}{Theorem}
\providecommand{\corollaryname}{Corollary} 
\providecommand{\definitionname}{Definition}
\providecommand{\assumptionname}{Assumption}
\providecommand{\remarkname}{Remark}
\DeclareMathOperator*{\argmax}{arg\,max}
 \newcommand{\del}[1]{\textcolor{red}{\sout{#1}}} 
 \newcommand{\del}[1]{} 
\newcommand{\nbc}[3]{
 {\colorbox{#3}{\bfseries\sffamily\scriptsize\textcolor{white}{#1}}}
 {\textcolor{#3}{\sf\small$\blacktriangleright$\textit{#2}$\blacktriangleleft$}}}
\newcommand{\nbc}[3]{}
 \renewcommand{\del}[1]{} 
\definecolor{tdcolor}{rgb}{1.0,0,0}
\definecolor{rplcolor}{rgb}{0,0.0,1.0}
\newcommand{\GP}{{\rm GP}}
\newcommand{\balpha}{\boldsymbol{\alpha}}
\newcommand{\Hc}{\mathcal{H}}
\newcommand{\Zhat}{\hat{Z}}
\newcommand{\Nc}{\mathcal{N}}
\newcommand{\xvhat}{\hat{\mathbf{x}}}
\newcommand{\xv}{\mathbf{x}}
\newcommand{\Xv}{\mathbf{X}}
\newcommand{\yv}{\mathbf{y}}
\newcommand{\EE}{\mathbb{E}}
\newcommand{\PP}{\mathbb{P}}
\newcommand{\RR}{\mathbb{R}}
\newcommand{\var}{\mathrm{Var}}
\newcommand{\Bv}{\mathbf{B}}
\newcommand{\Kv}{\mathbf{K}}
\newcommand{\kv}{\mathbf{k}}
\newcommand{\bzero}{\boldsymbol{0}}
\newcommand{\bxi}{\boldsymbol{\xi}}
\newcommand{\fhat}{\hat{f}}
\newcommand{\Rhat}{\hat{R}}
\newcommand{\mat}{\mathcal{H}^{\nu}}
\newcommand{\ahat}{\hat{a}}
\newcommand{\beps}{\boldsymbol{\epsilon}}
\newcommand{\Tanh}{\mathrm{Tanh}}
\newcommand{\xhat}{\hat{x}}
\newcommand{\sizecustom}{0.27\linewidth}
\begin{document} 


\maketitle

\begin{abstract}
In this paper, we study the problem of estimating the normalizing constant $\int e^{-\lambda f(\xv)}d\xv$ through queries to the black-box function $f$, where $f$ belongs to a {\em reproducing kernel Hilbert space} (RKHS), and $\lambda$ is a problem parameter.  We show that to estimate the normalizing constant within a small relative error, the level of difficulty depends on the value of $\lambda$: When $\lambda$ approaches zero, the problem is similar to {\em Bayesian quadrature} (BQ), while when $\lambda$ approaches infinity, the problem is similar to {\em Bayesian optimization} (BO). More generally, the problem varies between BQ and BO.  We find that this pattern holds true even when the function evaluations are noisy, bringing new aspects to this topic.  Our findings are supported by both algorithm-independent lower bounds and algorithmic upper bounds, as well as simulation studies conducted on a variety of benchmark functions.

\end{abstract}
\section{Introduction}
The problem of {\em normalizing constant (NC) estimation} (also known as {\em (log-)partition function estimation}) is of interest in a variety fields, such as Bayesian statistics \citep{Che97, Gel98}, machine learning \citep{Des11}, statistical mechanics \citep{Sto10}, and other areas involving the distribution of an energy function.  Given a distribution on a domain $D$ with measure $d\xv$, the normalizing constant is the integral $Z=\int_D e^{-f(\xv)}d\xv$.  In many classical works, $f$ is assumed to be (strongly) convex, so that the distribution is (strongly) log-concave.  Asymptotic/non-asymptotic performance bounds for the log-concave setting have been studied in detail; see for example \citep{Ge20} and the references therein.

Although the first-order (gradient) information of $f$ is usually assumed to be available in the classical setting, it is also of interest to estimate the normalizing constant of non-convex black-box functions using only zeroth-order bandit feedback.  Following the rich literature on Bayesian optimization (BO) and Bayesian quadrature (BQ), it is particularly natural to consider functions lying in a {\em reproducing kernel Hilbert space} (RKHS), for which Gaussian process (GP) techniques can be used to quantify uncertainty.  The NC estimation problem brings new challenges compared to BO (which seeks to find $\argmax_{\xv\in D}f(\xv)$) and BQ (which seeks to approximate $\int_D f(\xv) d\xv$), and in fact turns out to share features of both.


In this paper, we study the asymptotic limits of NC estimation for functions in RKHS with bounded norm, adopting Bayesian numerical analysis techniques.  As is commonly done, we consider an additional parameter $\lambda$ in the estimation problem, i.e., we consider $Z=\int_D e^{-\lambda f(\xv)}d\xv$.  The interpretation of $\lambda$ differs across different subjects; for instance, it can represent the reciprocal of the thermodynamic temperature of a system, or it can be used to ``temper'' or ``amplify'' terms in Bayesian statistics.  Additionally, we consider the scenario where observations may be corrupted by additive Gaussian noise with variance $\sigma^2$, and $\lambda$ and $\sigma$ may vary with the time horizon $T$ as $T\to\infty$.  

\begin{figure}[t]
    \centering
    \ifthenelse{\boolean{ARXIV}}
    {
        \includegraphics[width=0.7\linewidth]{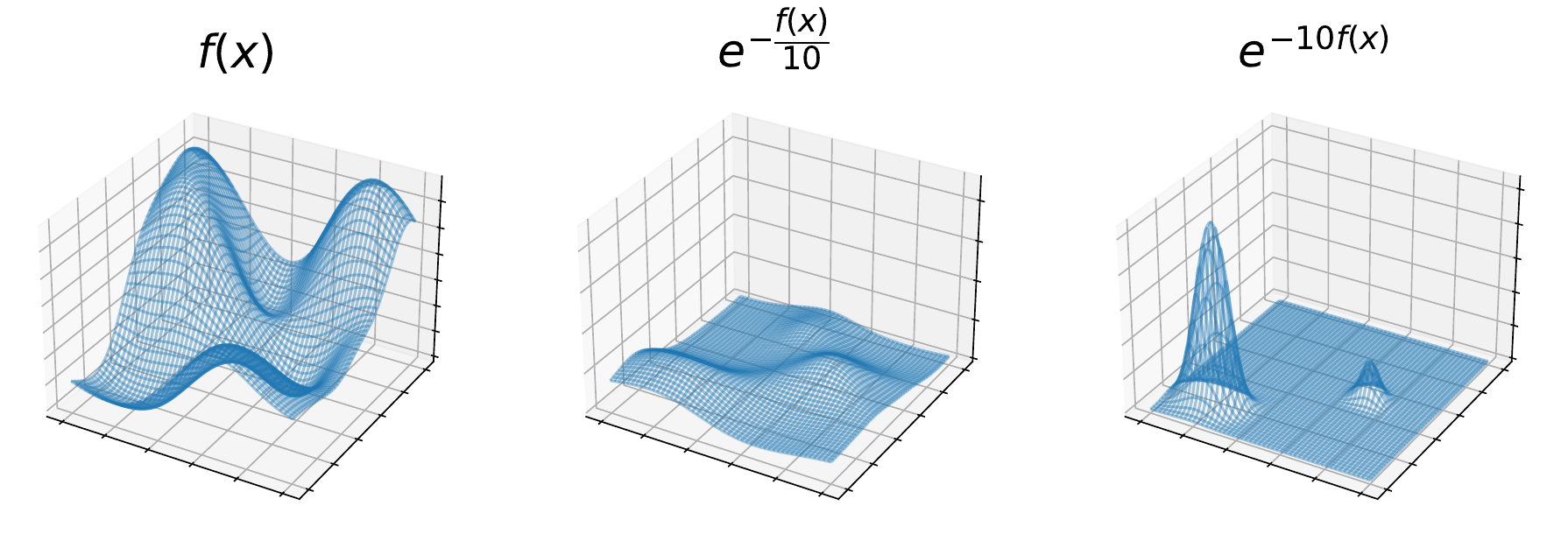}
    }{
        \includegraphics[width=\columnwidth]{figs/demo.pdf}
    }
    \caption{An illustration of estimating the normalizing constant under small (0.1) and large (10) $\lambda$.}
    \label{fig:demo}
\end{figure}

\begin{table*}[t]
    \def\arraystretch{1.85}
    \centering
    \resizebox{0.75\textwidth}{!}{
    \begin{tabular}{|cccc|cccc|}
    \hline
    \multicolumn{4}{|c|}{$\sigma=0$ (noiseless)}   &  \multicolumn{4}{c|}{$\sigma=\Theta(T^{-\frac{1}{4}})$}     \\ \hline
    \multicolumn{2}{|c|}{$\lambda\to 0$}   & \multicolumn{2}{c|}{$\lambda\to\infty$}     & \multicolumn{2}{c|}{$\lambda\to 0$}  & \multicolumn{2}{c|}{$\lambda\to\infty$}    \\ \hline
    \multicolumn{2}{|c|}{$\Theta(\lambda T^{-\frac{\nu}{d}-1})$}   & \multicolumn{1}{c|}{$\Omega(\lambda T^{-\frac{\nu}{d}-1})$} & \multicolumn{1}{c|}{$O(\lambda T^{-\frac{\nu}{d}-\frac{1}{2}})$} & 
    \multicolumn{4}{c|}{$\Theta(\lambda T^{-\frac{3}{4}})$ (needs $\nu\ge \frac{d}{2}$ when $\lambda\to\infty$)}  \\ 
    
    \hline
    \multicolumn{8}{c}{} \\[-1.8em]
    \hline

    \multicolumn{4}{|c|}{$\sigma=\Theta(T^{-\frac{1}{2}})$} &  \multicolumn{4}{c|}{$\sigma=\Theta(1)$ (constant)}  \\ \hline
    \multicolumn{2}{|c|}{$\lambda\to 0$}  & \multicolumn{2}{c|}{$\lambda\to\infty$} & \multicolumn{2}{c|}{$\lambda\to 0$}  & \multicolumn{2}{c|}{$\lambda\to\infty$}  \\ \hline
    \multicolumn{2}{|c|}{$\Theta(\lambda T^{-1})$} & \multicolumn{1}{c|}{$\Omega(\lambda T^{-1})$} & \multicolumn{1}{c|}{$O^*\big(\lambda T^{-\frac{4\nu+d}{4\nu+2d}}\big)$} &
    \multicolumn{2}{c|}{$\Theta(\lambda T^{-\frac{1}{2}})$} & \multicolumn{1}{c|}{$\Omega(\lambda T^{-\frac{1}{2}})$} & \multicolumn{1}{c|}{$O^*\big(\lambda T^{-\frac{\nu}{2\nu+d}}\big)$}  \\ \hline
    \end{tabular}
    }
    \caption{Selected results instantiated from Theorems \ref{thm:noiseless_lower}, \ref{thm:noisy_lower}, \ref{thm:noiseless_upper} and \ref{thm:noisy_upper} for NC.  $O^*(\cdot)$ hides $\mathrm{poly}(\log T)$ terms.  $\Theta(\cdot)$ means the lower and upper bounds match up to constant factors.  For $\lambda\to \infty$, we treat $\lambda=\Theta(T^c)$, $c>0$ and $T\to\infty$.  \label{tab:results}}
\end{table*}

For $d$-dimensional Mat\'ern-$\nu$ RKHS functions, our findings reveal that, to estimate $Z$ within a multiplicative factor of $1\pm\epsilon$ with constant probability (e.g., 0.99), the level of difficulty generally exhibits the following behaviour (also partially listed in Table \ref{tab:results}):
\begin{itemize}
    \item When $\lambda\to 0$, the error bound of $\epsilon$ is similar to BQ.  Our lower bound is similar to the BQ lower bounds stated in \citep{Pla96, Cai23}, who showed that the order-optimal average {\em mean absolute error} of $\epsilon$ is $\Theta(T^{-\frac{\nu}{d}-1}+\sigma T^{-\frac{1}{2}})$.
    \item When $\lambda \to\infty$, the problem becomes more similar to BO, and the error bounds again reflect this.  For instance, when $\lambda=\Theta(T)$ and $T\to\infty$, our noiseless $\Omega(T^{-\frac{\nu}{d}})$ lower bound coincides with the noiseless BO lower bound stated in \citep{Bul11}, who showed that the average {\em simple regret} is $\Theta(T^{-\frac{\nu}{d}})$.  For the noisy setting, the results and their tightness vary depending on the noise level.  When $\sigma = \Theta(1)$ (i.e., constant noise), a simple sampling strategy from the BO literature \citep{Vak21} leads to $O^*\big(\lambda T^{-\frac{\nu}{2\nu+d}}\big)$ regret,\footnote{Monte-Carlo algorithms achieve $O(\frac{1}{\sqrt{T}})$ when $\lambda=\Theta(1)$, which is better than the mentioned $O^*\big(T^{-\frac{\nu}{2\nu+d}}\big)$ regret, but to our knowledge, its standard analysis is unable to capture the dependence on $\lambda$ that might vary with $T$.} but the $\Omega(\lambda T^{-\frac{1}{2}})$ lower bound leaves open the possibility that a better algorithm might exist.  On the other hand, at lower noise levels there are regimes where our upper and lower bounds match, as exemplified by the case $\sigma=\Theta(T^{-\frac{1}{4}})$ (and $\nu \ge \frac{d}{2}$) with regret $\Theta(\lambda T^{-\frac{3}{4}})$.
    \item Depending on the precise scaling of $\lambda$, the error bound can vary between those of BQ and BO.  For instance, in the noiseless setting, we can control the rate of $\lambda$ going to infinity as $\lambda=O(T^{\frac{\nu}{d}})$, and obtain a $\Omega(T^{-1})$ lower bound.
\end{itemize}
The connections to BQ and BO can partially be understood with the aid of the figures depicted in Figure \ref{fig:demo}.  When $\lambda$ is very small, estimating $\int_D e^{-\lambda f(\xv)}d\xv$ is almost the same as estimating $\int_D (1-\lambda f(\xv))d\xv$, which is a shifted and scaled version of $\int_D f(\xv)d\xv$.  On the other hand, for large values of $\lambda$, the minimum value of $f(\xv)$ stands out considerably compared to the rest, as exemplified by the narrow bump in the figure.

A more detailed discussion of our contributions is deferred to the subsequent section on related work, where we provide a detailed analysis of the advancements and novel aspects of our research compared to existing works.

\section{Related Work}\label{sec:relate}
{\noindent \bf Log-concave sampling.} Over the years, a wide range of methods have been developed for estimating the normalizing constant of a probability distribution \citep{Sto10}, with a particular focus on algorithms that can leverage the log-concavity of the normalizing constant based on convex optimization \citep{Lov06,Bro18,Dwi18,Ge20}.  These algorithms have been shown to have strong theoretical guarantees when provided with access to $\nabla f(\xv)$, by building on specific and popular bounds of Langevin-based sampling algorithms \citep{Dur16}.

\vskip 0.1in
{\noindent \bf Non-log-concave Sampling.} Relatively fewer studies have explored the convergence rate for non-log-concave distributions, with the most related one being \citep{Hol23}.  Our results are generally not directly comparable with theirs due to the consideration of a different function class, leading to different algorithmic techniques (e.g., piecewise constant approximation vs.~Gaussian process methods).  In addition, one of our main goals is to handle noise, whereas \citep{Hol23} focuses on the noiseless setting.  A more detailed comparison to \citep{Hol23} can be found in Appendix~\ref{sec:comparison}.

\vskip 0.1in
{\noindent \bf Other Non-log-concave Sampling Works.} Other works that consider non-convex energy functions appear to have significantly less direct relevance to our study due to the different modeling assumptions we adopt.  However, it is worth mentioning a few of them, and interested readers can refer to the references provided in those papers for further exploration.  Some of these works specifically focus on the analysis of {\em local} non-convexity within a small region, while the functions exhibit strong convexity outside of that region \citep{Che18, Ma19}.  Additionally, there are studies that analyze upper and lower bounds related to the {\em relative Fisher information} by imposing restrictions on the smoothness of $\nabla f$ \citep{Bal22, Che23}.




\vskip 0.1in
{\noindent \bf Bayesian Optimization / GP Bandits.}  An important distinction that sets us apart from previous literature is that all of our results are rooted in the kernel-based bandit framework, which leverages Gaussian processes and Bayesian inference to approximate and optimize unknown functions.   Within this area, two prominent problems are Bayesian quadrature (BQ) \citep{Kan19, Wyn21, Cai23} for approximating integrals and Bayesian optimization (BO) \citep{Sri09, cho17, Vak21} for finding the global maximum of a black-box function. These methods have gained popularity due to their wide range of applications and favorable theoretical properties.  While BO is understood to be a more challenging problem than BQ in terms of the sample complexity, it can still be applied to provide sub-optimal BQ bounds (e.g., see \citep{Cai23}).  In our case, it can similarly be utilized to obtain (sub-optimal) bounds for NC, thanks in part to the tight noisy $L^{\infty}$ upper bound and upper bound on {\em information gain} derived by \citep{Vak21,Vak21a}.

We briefly note that various non-linear transformations of black-box functions (e.g., $e^{- \lambda f(x)}$ or $f(x)^2$) have appeared in other contexts such as optimization and regression \citep{Fla17,Ast19,Mar20}, but to our knowledge none have studied the NC problem.

\section{Problem Setup}\label{sec:setup}
Let $f\in\Hc(B)$ be an RKHS function on the compact domain $D=[0,1]^d$,\footnote{Any rectangular domain can be reduced to $[0,1]^d$ by suitable shifting and scaling.} where $\Hc(B)$ denotes the set of all functions whose RKHS norm $\|\cdot\|_{\Hc}$ is upper bounded by some constant $B>0$.  Here the RKHS norm $\|\cdot\|_{\Hc}$ depends on our choice of the kernel $k(\xv,\xv')$, and we focus our attention on the widely-adopted Mat\'ern-$\nu$ kernel: 
\begin{equation*}
    k(\xv,\xv') = \dfrac{2^{1-\nu}}{\Gamma(\nu)} \bigg(\dfrac{\sqrt{2\nu}\,\|\xv-\xv'\|}{l}\bigg)^{\nu}  J_{\nu}\bigg(\dfrac{\sqrt{2 \nu}\,\|\xv-\xv'\|}{l} \bigg), \label{eq:kMat}
\end{equation*}
where $J_{\nu}$ is the modified Bessel function, and $\Gamma$ is the gamma function.  To make the dependence on $\nu$ explicit, we use $\mat$ to denote the Mat\'ern RKHS, with its RKHS norm being represented by $\|\cdot\|_{\mat}$. Given query access to $f(\xv)$, at time step $t$, the observations $y_t$ are modeled as follows:
\begin{itemize}
    \item In the noiseless setting, we simply have $y_t = f(\xv_t)$.
    \item In the noisy setting, we have $y_t=f(\xv_t)+z_t$, where $z_t\sim \Nc(0,\sigma^2)$ is i.i.d. Gaussian noise.
\end{itemize}
Our goal is to approximate the normalizing constant,
\begin{equation} \label{eq:z}
    Z(f) =\int_{D} e^{-\lambda f(\xv)} d\xv, \quad \lambda>0,
\end{equation}
leading to an estimate $\Zhat(f)$, which we seek to be accurate within a multiplicative factor of $1\pm \epsilon$. In other words, we are interested in the quantity
\begin{equation}\label{eq:error}
    \epsilon = \sup_{f\in\mat} \Big|\frac{\Zhat(f)}{Z(f)} - 1\Big|.
\end{equation}
While the additive error (which would be $|\Zhat - Z|$ in our case) is commonly used for integration problems such as BQ, it is less suitable here unless $\lambda\to 0$ (yielding $Z \to 1$).  This is because as $\lambda$ grows large, $Z$ may approach zero (in which case $|\Zhat - Z| \le \epsilon$ may hold trivially) or infinity (in which case $|\Zhat - Z| \le \epsilon$ may be an overly stringent requirement).  The relative error \eqref{eq:error} serves to more naturally unify these various cases. 
%
%
In a similar manner, \citep{Hol23} defines the error as $\epsilon = \big|\log \Zhat - \log Z \big|$, which is essentially equivalent to \eqref{eq:error} due to the fact that $\log(1+\alpha)$ behaves as $O(\alpha)$ when $|\alpha|$ is strictly smaller than one (and as $\alpha(1+o(1))$ when $\alpha \to 0$).


\section{Lower Bounds}\label{sec:lb}

In this section, we provide algorithm-independent lower bounds on $\epsilon$ (see \eqref{eq:error}), i.e., impossibility/hardness results, for both the noiseless and noisy settings. 

\begin{thm} \label{thm:noiseless_lower}
    {\em (Noiseless Lower Bound) } Consider the noiseless setting with $f\in\mat(B)$, $\nu+\frac{d}{2}\ge 1$, and a time horizon $T\to \infty$.  For any algorithm that estimates $Z$ and produces an estimate $\hat{Z}$ satisfying \eqref{eq:error}, the worst-case $f\in\mat(B)$ must have the following lower bound on \eqref{eq:error} with $\Omega(1)$ probability:
    \begin{itemize}
        \item If $\lambda=\Theta(T^c)$ with $c\le\frac{\nu}{d}+\frac{1}{2}$, then $\epsilon=\Omega(T^{-\frac{\nu}{d}-1+c})$.
        \item If $\lambda=\Theta(\log T)$, then $\epsilon = \Omega(T^{-\frac{\nu}{d}-1} \log T)$.
    \end{itemize}
\end{thm}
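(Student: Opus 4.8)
The plan is to prove the noiseless lower bound through a Bakhvalov-style (average-case) argument over a random ensemble of localized bumps, reducing the relative error of estimating $Z$ to the error of estimating a sum of hidden $\pm 1$ signs. A single deterministic two-function (``needle in an unqueried cell'') construction would overstate the bound, because a randomized/Monte-Carlo sampler averages it away; randomness in the ensemble is therefore essential, and the exponential transform is what converts the sign-estimation hardness into a statement about $Z$.

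\textbf{Construction.} I would partition $D=[0,1]^d$ into $M=\Theta(T)$ congruent subcubes of side $w=M^{-1/d}=\Theta(T^{-1/d})$, fix a smooth profile $\phi\in C_c^\infty$ supported in the unit cell with $\phi\ge 0$ and $\int\phi>0$, and place a scaled copy $g_j(\xv)=h\,\phi((\xv-\mathbf{c}_j)/w)$ in each cell $j$. Drawing i.i.d.\ signs $\epsilon_j\in\{\pm1\}$, set $f=\sum_{j=1}^M \epsilon_j g_j$. Since the supports are disjoint the Mat\'ern norm is additive across cells, so $\|f\|_{\mat}^2\asymp M h^2 w^{-2\nu}$ (using the equivalence with $H^{\nu+d/2}$, where $\nu+\frac{d}{2}\ge 1$ guarantees the profile sits in the space); calibrating $h\asymp w^\nu/\sqrt{M}\asymp T^{-\nu/d-1/2}$ keeps $f\in\mat(B)$ for \emph{every} sign pattern. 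The governing quantity is $\lambda h\asymp \lambda T^{-\nu/d-1/2}=\Theta(T^{\,c-\nu/d-1/2})$, which the hypothesis $c\le\frac{\nu}{d}+\frac12$ keeps at $O(1)$.

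\textbf{Exact sign-decomposition of $Z$.} Using disjoint supports, $Z(f)=1-\sum_j w^d\int_{\mathrm{cell}}(1-e^{-\lambda\epsilon_j h\phi})$. Splitting each summand into its even and odd parts in $\epsilon_j$, I would write $Z(f)=Z_{\det}-w^d\big(\int\sinh(\lambda h\phi)\big) S$, where $S=\sum_j\epsilon_j$ and $Z_{\det}$ is the sign-independent piece arising from $\cosh(\lambda h\phi)-1$. This split is exact: all dependence on the signs is carried by the single linear statistic $S$, with coefficient $w^d\int\sinh(\lambda h\phi)\asymp \lambda h w^d\asymp \lambda T^{-\nu/d-3/2}$ (here $\lambda h=O(1)$ is precisely what makes $\sinh(\lambda h\phi)\asymp \lambda h\phi$, avoiding the exponential blow-up of the large-$\lambda$ regime), while $Z_{\det}=1+\Theta((\lambda h)^2)=\Theta(1)$, so relative and absolute errors in $Z$ agree up to constants.

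\textbf{Bakhvalov step and anti-concentration.} For any randomized, adaptive algorithm, after $T$ noiseless queries let $\mathcal Q$ be the set of cells containing a query point; since each query lies in one cell, $|\mathcal Q|\le T$. Conditioned on the entire transcript, the signs $\{\epsilon_j\}_{j\notin\mathcal Q}$ remain i.i.d.\ uniform (their cells were never observed), so $S$ equals a transcript-measurable constant plus a centered sum of $m=M-|\mathcal Q|\ge M-T=\Theta(T)$ independent signs. Any estimate $\Zhat$ is fixed given the transcript, hence $|\Zhat-Z|=w^d\big(\int\sinh(\lambda h\phi)\big)\,|S-s_0|$ for some transcript-measurable $s_0$; by Berry--Esseen the centered part of $S$ is approximately $\mathcal{N}(0,m)$, so $|S-s_0|\ge\tfrac12\sqrt{m}$ with $\Omega(1)$ probability for \emph{every} $s_0$. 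This gives $|\Zhat/Z-1|=\Omega(\lambda h w^d\sqrt{T})=\Omega(\lambda T^{-\nu/d-1})$ with $\Omega(1)$ probability, and averaging over the ensemble exhibits a fixed worst-case $f\in\mat(B)$ that is this bad. Instantiating $\lambda=\Theta(T^c)$ yields the first bullet and $\lambda=\Theta(\log T)$ the second (the $\log T$ factor simply rides along linearly). The main obstacle is the nonlinearity of the exponential map; it is tamed by the exact $\sinh/\cosh$ split together with $\lambda h=O(1)$, which is exactly where the ceiling $c\le\frac{\nu}{d}+\frac12$ enters, and the remaining care is the adaptivity bookkeeping (unqueried signs stay independent given the transcript) and the passage from the variance bound to an $\Omega(1)$-probability statement.
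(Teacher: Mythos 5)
Your proposal is correct and follows essentially the same route as the paper's proof: a random ensemble of $M=\Theta(T)$ disjoint scaled bumps calibrated (via the disjoint-support additivity of the Mat\'ern/Sobolev norm under $\nu+\frac{d}{2}\ge 1$) so that $\lambda h=O(1)$, an exact reduction of $Z$ to an affine function of the sum of hidden signs, the observation that noiseless queries can pin down at most $T$ of the $M$ signs, and an anti-concentration step yielding a $\Theta(\sqrt{M})$ fluctuation that no transcript-measurable estimate can beat. The only differences are cosmetic: you use $\pm1$ signs with a $\sinh/\cosh$ split where the paper uses $\{0,1\}$ indicators (giving $Z=1+\Delta Z\sum_i S_i$ directly), and you justify the survival of unqueried signs by direct cell-counting rather than by invoking the posterior-variance lemma adapted from \citep{Cai23}, which reduces to the same counting argument when $\sigma=0$.
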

The condition $c\le \frac{\nu}{d}+\frac{1}{2}$ comes from a technical condition in the proof, and we believe it to be quite mild (and similarly in other results below); in particular, we cover broad cases of interest, some of which are given as follows:
\begin{itemize}
    \item For $c=0$, we have when $\lambda=\Theta(1)$, $\epsilon=\Omega(T^{-\frac{\nu}{d}-1})$.
    \item For $c=\frac{1}{2}$, we have when $\lambda=\Theta(\sqrt{T})$, $\epsilon=\Omega(T^{-\frac{\nu}{d}-\frac{1}{2}})$.
    \item For $c=\frac{\nu}{d}$, we have when $\lambda=\Theta(T^{\frac{\nu}{d}})$, $\epsilon=\Omega(T^{-1})$.
    \item For $c=\frac{\nu}{d}+\frac{1}{2}$, when $\lambda=\Theta(T^{\frac{\nu}{d}+\frac{1}{2}})$, $\epsilon=\Omega(T^{-\frac{1}{2}})$.
    \item For $c=1$ (which requires $\nu\ge \frac{d}{2}$), we have when $\lambda=\Theta(T)$, $\epsilon=\Omega(T^{-\frac{\nu}{d}})$.
\end{itemize}

Note that the noiseless BQ lower bound is known as $\epsilon=\Omega(T^{-\frac{\nu}{d}-1})$ (see e.g. \citep{Nov06,Cai23}), and the noiseless BO lower bound is $\epsilon=\Omega(T^{-\frac{\nu}{d}})$ \citep{Bul11}.  Thus, the above results indicate that when $\lambda=\Theta(1)$ or approaches 0, the lower bound matches that of BQ, whereas when $\lambda=\Theta(T)$, the bound matches that of BO.  Although the above arguments compare additive error with relative error, this is valid since our hard function class in proving the lower bound has $\Theta(1)$ normalizing constant.  See Appendix \ref{app:class} for further details.

Next, we turn our attention to the noisy setting.

\begin{thm} \label{thm:noisy_lower}
    {\em (Noisy Lower Bound) } Consider the noisy setting with $f\in\mat(B)$, $\nu+\frac{d}{2}\ge 1$, a time horizon $T\to \infty$, and noise variance $\sigma^2$ (possibly varying with $T$).  For any algorithm that estimates $Z$ and produces an estimate $\hat{Z}$ satisfying \eqref{eq:error}, the worst-case $f\in\mat(B)$ must have the following lower bound on \eqref{eq:error} with $\Omega(1)$ probability:
    \begin{itemize}
        \item If $\lambda = \Theta(T^c)$ and $\sigma = \Theta(T^a)$ with $c \le \min\big\{\frac{\nu}{d}+\frac{1}{2}, (\frac{1}{2}-a) \frac{2\nu+d}{2\nu+2d}\big\}$ and $a\le\frac{1}{2}$, then $\epsilon = \Omega(T^{-\frac{\nu}{d}-1+c} + \sigma T^{-\frac{1}{2}+c})$.
        \item If $\lambda=\Theta(\log T)$ and $\sigma = \Theta(T^a)$ with $a<\frac{1}{2}$, then $\epsilon = \Omega(T^{-\frac{\nu}{d}-1}\log T + \sigma T^{-\frac{1}{2}}\log T)$.
    \end{itemize}
\end{thm}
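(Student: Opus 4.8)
The plan is to establish the two terms of the lower bound separately and then combine them: since $A+B\le 2\max\{A,B\}$ for nonnegative $A,B$, a pair of $\Omega(A)$ and $\Omega(B)$ bounds yields $\Omega(A+B)$. Throughout, I would work with a hard subclass of $\mat(B)$ built from a smooth base function $f_0$ together with localized ``bump'' perturbations, engineered so that $Z(f_0)=\Theta(1)$ and every function in the class stays in $\mat(B)$. The relative error \eqref{eq:error} is then lower bounded by exhibiting two functions in this class that the algorithm cannot statistically distinguish from $T$ noisy queries, yet whose normalizing constants differ by the target relative amount.

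For the first term $\Omega(T^{-\frac{\nu}{d}-1+c})$ (and its $\log T$ analogue), I would reuse the noiseless construction behind Theorem~\ref{thm:noiseless_lower}. The key point is that this argument is noise-agnostic: it partitions $D$ into $M=\Theta(T)$ cells of width $\Theta(M^{-1/d})$ carrying bumps of height $\Theta(M^{-\nu/d})$, and a pigeonhole argument guarantees an unqueried cell about which the algorithm obtains no information, noisy or not. Toggling the bump in that cell changes $Z$ by a relative factor $\Theta(\lambda M^{-\frac{\nu}{d}-1})=\Theta(T^{-\frac{\nu}{d}-1+c})$, using $Z(f_0)=\Theta(1)$ and, within the stated range of $c$, the linearization $e^{\pm\lambda g}-1=\pm\lambda g\,(1+o(1))$. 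Hence the noiseless bound transfers to the noisy setting, since adding noise can only reduce the algorithm's information.

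The genuinely new ingredient is the noise-dependent term $\Omega(\sigma T^{-\frac12+c})$, which I would obtain from a two-point (Le Cam) argument. Take $f_0$ and $f_1=f_0-g$, where $g$ is a bump of height $h=\Theta(\sigma T^{-1/2})$ over a region of width $\Theta(1)$, chosen so that $\|g\|_{\mat}\le B$ and $Z(f_0)=\Theta(1)$. Because each query reveals the function value only up to $\Nc(0,\sigma^2)$ noise, the KL divergence between the observation laws under $f_0$ and $f_1$ (for any, possibly adaptive, algorithm) is at most $\sum_{t} \frac{(f_1(\xv_t)-f_0(\xv_t))^2}{2\sigma^2} \le \frac{T h^2}{2\sigma^2}=O(1)$, so by Pinsker the total variation $\dTV$ between the observation sequences is bounded away from $1$ and no estimator succeeds with more than $\Omega(1)$ failure probability. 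On the other hand $Z(f_1)-Z(f_0)=\int_D e^{-\lambda f_0(\xv)}\big(e^{\lambda g(\xv)}-1\big)\,d\xv=\Theta\big(\lambda\!\int_D g\big)=\Theta(\lambda h)=\Theta(\sigma T^{-\frac12+c})$, again via linearization. Combining the two indistinguishable-pair constructions gives the claimed $\Omega(T^{-\frac{\nu}{d}-1+c}+\sigma T^{-\frac12+c})$, and the $\lambda=\Theta(\log T)$ case follows identically with $\lambda=\log T$ in place of $T^c$, where the linearization holds trivially.

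The main obstacle, and where the technical conditions originate, is controlling the nonlinear map $f\mapsto e^{-\lambda f}$ in the relative-error sense simultaneously with the noisy indistinguishability. Concretely, I must ensure (i) every perturbed function remains in $\mat(B)$, which pins down the admissible bump height/width through the Matérn--Sobolev norm scaling $\|g\|_{\mat}\asymp(\text{height})\cdot(\text{width})^{-\nu}$; (ii) $Z(f_0)=\Theta(1)$, so additive changes in $\int e^{-\lambda f}$ translate into the desired relative changes; and (iii) that the linearization $e^{\lambda g}-1=\lambda g\,(1+o(1))$ is valid, i.e.\ $\lambda\cdot(\text{bump height})=O(1)$. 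Requirement (iii) applied to the critical height at which the RKHS and noise constraints balance is what accounts for the condition $c\le(\tfrac12-a)\frac{2\nu+d}{2\nu+2d}$, while $a\le\tfrac12$ guarantees that $h=\Theta(\sigma T^{-1/2})$ stays below the $O(1)$ RKHS ceiling, and $c\le\frac{\nu}{d}+\frac12$ is inherited from the first-term construction. Turning the change-of-measure step into a clean $\Omega(1)$ success-probability gap, rather than a vanishing one, is the most delicate part, and may require upgrading the two-point argument to a Fano-type argument over several bumps to control constants.
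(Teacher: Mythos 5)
Your proposal reaches the stated conclusion by a genuinely different route from the paper. The paper never uses a two-point (Le Cam) argument: it puts a uniform prior over all $2^M$ sign patterns $f=\sum_{i}S_i g_i$ of $M$ disjoint bumps of height $\eta=\Theta(M^{-\frac{\nu}{d}-\frac12})$ (the extra $M^{-1/2}$ is exactly what lets all $M$ bumps coexist inside the RKHS ball), invokes Lemma \ref{lem:noisy_horizon} to show that whenever $T<c_3 M\max\{1,\sigma^2/\eta^2\}$ a constant fraction of the $S_i$ retain positive posterior variance, and then applies a CLT to the conditionally independent $S_i$ to conclude that $Z=1+\Delta Z\sum_i S_i$ has posterior standard deviation $\Omega(\lambda\eta M^{-1/2})$; the two terms of the theorem come from the two branches of the $\max$, handled in one unified construction. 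Your decomposition into (i) a noiseless single-unqueried-cell term transferred to the noisy setting by the simulation argument and (ii) a separate Le Cam pair built from one $\Theta(1)$-width bump of height $\Theta(\sigma T^{-1/2})$ is more elementary, and for the $\sigma$-term it in fact works under the weaker condition $c\le\frac12-a$ rather than $c\le(\frac12-a)\frac{2\nu+d}{2\nu+2d}$: the paper's stricter exponent is an artifact of tying the bump height to $M$ through the multi-bump RKHS budget, which your single wide bump sidesteps. The aggregation $\Omega(A)$ and $\Omega(B)$ into $\Omega(A+B)$ and the $\lambda=\Theta(\log T)$ case are fine.

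Two loose ends should be flagged. First, your description of the hard class --- $M$ cells all carrying bumps of height $\Theta(M^{-\nu/d})$ --- is not the construction behind Theorem \ref{thm:noiseless_lower}: if all such bumps could be simultaneously active the RKHS norm would be $\Theta(\sqrt{M})$. It is consistent only as a two-point class $\{f_0,\,f_0+g_i\}$ with a single active bump, which is indeed all your argument uses. Second, with that taller single bump the linearization $e^{\lambda g}-1=\lambda g\,(1+o(1))$ requires $\lambda\eta=\Theta(T^{c-\frac{\nu}{d}})=O(1)$, i.e.\ $c\le\frac{\nu}{d}$, so your claim that it holds throughout the stated range $c\le\frac{\nu}{d}+\frac12$ is false for $c\in(\frac{\nu}{d},\frac{\nu}{d}+\frac12]$. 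The conclusion survives there, since in that regime $Z(f_0+g_i)/Z(f_0)\to\infty$ and two indistinguishable functions whose normalizing constants differ by an unbounded factor force relative error $\Omega(1)$, which dominates $T^{-\frac{\nu}{d}-1+c}=O(T^{-1/2})$; but that degenerate case must be argued separately rather than by linearization. Relatedly, your explanation that $c\le(\frac12-a)\frac{2\nu+d}{2\nu+2d}$ arises from a ``critical height where the RKHS and noise constraints balance'' describes the paper's multi-bump construction, not the one you actually propose.
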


Since the noisy setting involves the variables $a$ and $c$, we will focus mainly on the most studied case where $\sigma=\Theta(1)$ ($a=0$) for the sake of clarity.  For constant noise variance, the BQ noisy lower bound is known to be $\Omega(T^{-\frac{1}{2}})$ \citep{Pla96, Cai23}, and the BO noisy lower bound is known as $\epsilon=\Omega(T^{-\frac{\nu}{2\nu+d}})$ \citep{Sca17,Cai21}.  To compare against these, for the first dot point in Theorem \ref{thm:noisy_lower}, we consider the following specific $c$ values:
\begin{itemize}
    \item For $c=0$, we have when $\lambda=\Theta(1)$, $\epsilon=\Omega(T^{-\frac{1}{2}})$, which coincides with the noisy BQ lower bound.
    \item For $c=\frac{d}{4\nu+2d}$ (needs $\nu\ge \frac{\sqrt{5}-1}{4}d$, so that $\frac{d}{4\nu+2d} \le \frac{2\nu+d}{4\nu+4d}$), we have when $\lambda=\Theta(T^{\frac{d}{4\nu+2d}})$, $\epsilon=\Omega(T^{-\frac{\nu}{2\nu+d}})$, which coincides with the noisy BO lower bound.  Note that this particular choice of $\lambda$ is somewhat artificial, but it highlights the fact that our result can lead to BO-like results.
    \item For $c=\frac{1}{3}$ (needs $\nu\ge \frac{d}{2}$), we have when $\lambda=\Theta(T^{\frac{1}{3}})$, $\epsilon=\Omega(T^{-\frac{1}{6}})$.
    \item For $c=\frac{1}{4}$, we have when $\lambda=\Theta(T^{\frac{1}{4}})$, $\epsilon=\Omega(T^{-\frac{1}{4}})$.
    \item For $c=\frac{2\nu+d}{4\nu+4d}$ (the maximum allowed value), we have when $\lambda=\Theta(T^{\frac{2\nu+d}{4\nu+4d}})$, $\epsilon=\Omega(T^{ -\frac{d}{4\nu+4d}})$.
\end{itemize}

Further analysis of this lower bound can be found in Appendix \ref{app:noisy_lb}; similar to the above results, we may end at BQ or BO type of error bounds for different $a$ and $c$ values.  To illustrate an example, we recall the result shown in Table \ref{tab:results} when $a=-\frac{1}{2}$, where the lower bound is $\Omega(\lambda T^{-\frac{1}{2}})$. Substituting $a=-\frac{1}{2}$ into Theorem \ref{thm:noisy_lower}, it can be seen that the value of $c$ lies in the range $(-\infty, \frac{2\nu+d}{2\nu+2d}]$.  By choosing $\lambda=\Theta(T^{-\frac{\nu}{d}})$, we obtain the BQ-like lower bound $\Omega = \Omega(T^{-\frac{\nu}{d}-\frac{1}{2}})$, whereas by choosing $\lambda=\Theta(T^{\frac{2\nu+d}{2\nu+2d}})$, we obtain the BO-like lower bound $\Omega = \Omega(T^{-\frac{d}{4\nu+4d}})$.

In the following section, we will derive algorithmic upper bounds that sometimes match the algorithm-indpeendent lower bounds, though with gaps remaining in other cases.

\section{Upper Bounds}\label{sec:upper}
\begin{algorithm}[t]
	\caption{Two-batch normalizing constant estimation algorithm}  \label{alg:two-phase-nc}
	\begin{algorithmic}[1]
		\STATE {\bfseries Input:} Domain $D$, $\GP(0,k)$ prior, GP hyperparameter $\xi$, time horizon $T$, noise standard deviation $\sigma$.
		\FOR{$t=1,\dots,T/2$}
			\STATE Select $\xv_t=\argmax_{\xv\in D}\sigma_{t-1}(\xv)$.
			\STATE Update $\sigma_t$ using $\xv_1,\dotsc,\xv_t$.
		\ENDFOR
		\STATE Update $\mu_{T/2}(\xv)$ using $\{\xv_t\}_{t=1}^{T/2}$ and $\{y_t\}_{t=1}^{T/2}$.
		\FOR{$t=T/2+1,\dots,T$}
			\STATE Sample $\xv_t \sim \frac{e^{-\lambda\mu_{T/2}(\xv)}}{\int_D e^{-\lambda\mu_{T/2}(\xv)}d\xv}$ independently. \label{line:samp}
			\STATE Receive $y_t=f(\xv_t) + z_t$
		\ENDFOR
		\STATE Compute the approximate integral $\Zhat_1 =\int_D e^{-\lambda\mu_{T/2}(\xv)}d\xv$
		\STATE Compute the residual $\Rhat=\frac{2}{Te^{\lambda^2\sigma^2/2}}\sum_{t=T/2 + 1}^{T} e^{\lambda\mu_{T/2}(\xv_t)-\lambda y_t}$
		\STATE Output $\Zhat = \Zhat_1 \cdot \Rhat$
	\end{algorithmic}
\end{algorithm}

We present a GP-based two-batch algorithm for estimating the normalizing constant in Algorithm \ref{alg:two-phase-nc}. 
 Given a GP prior model $\GP(0,k)$, after observing $t$ samples, the posterior distribution is also a GP with the following posterior mean and variance:
\begin{align}
    \mu_{t}(\xv) &= \kv_t(\xv)^T\big(\Kv_t + \xi \mathbf{I}_t \big)^{-1} \yv_t,  \label{eq:posterior_mean} \\ 
    \sigma_{t}^2(\xv) &= k(\xv,\xv) - \kv_t(\xv)^T \big(\Kv_t + \xi \mathbf{I}_t \big)^{-1} \kv_t(\xv), \label{eq:posterior_variance}
\end{align}
where $\yv_t = [y_1,\ldots,y_t]^T$, $\kv_t(\xv) = \big[k(\xv_i,\xv)\big]_{i=1}^t$, $\Kv_t = \big[k(\xv_t,\xv_{t'})\big]_{t,t'}$ is the kernel matrix, $\mathbf{I}_t$ is the identity matrix of dimension $t$, and $\xi>0$ is a hyperparameter. 
\begin{figure*}[!t]
    \centering
    \begin{subfigure}{\linewidth}
        \centering
        \includegraphics[width=\sizecustom]{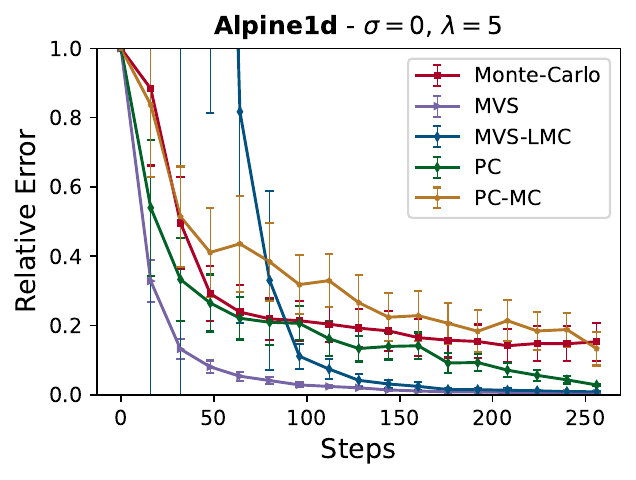}
        \includegraphics[width=\sizecustom]{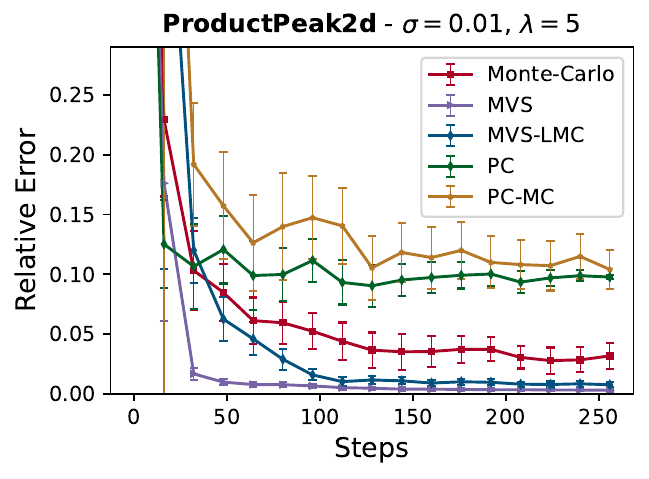}
        \includegraphics[width=\sizecustom]{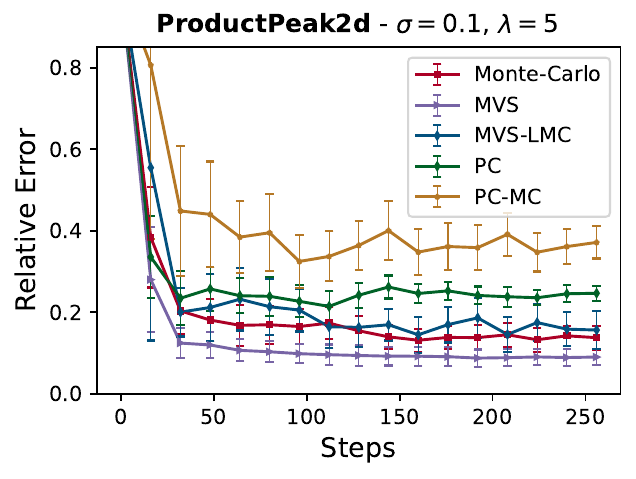}
    \end{subfigure}
    \begin{subfigure}{\linewidth}
        \centering
        \includegraphics[width=\sizecustom]{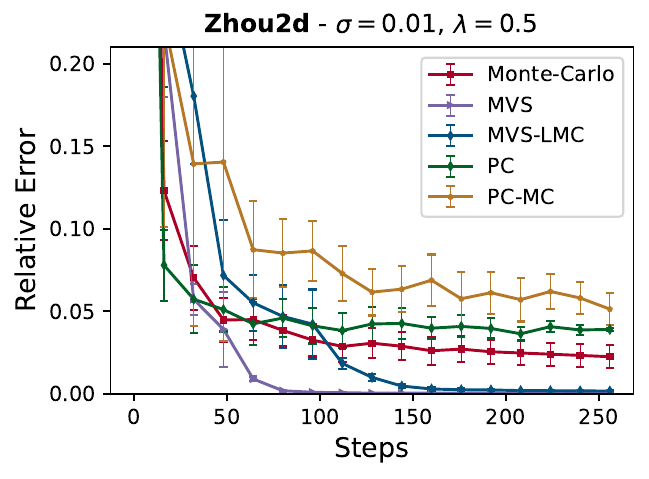}
        \includegraphics[width=\sizecustom]{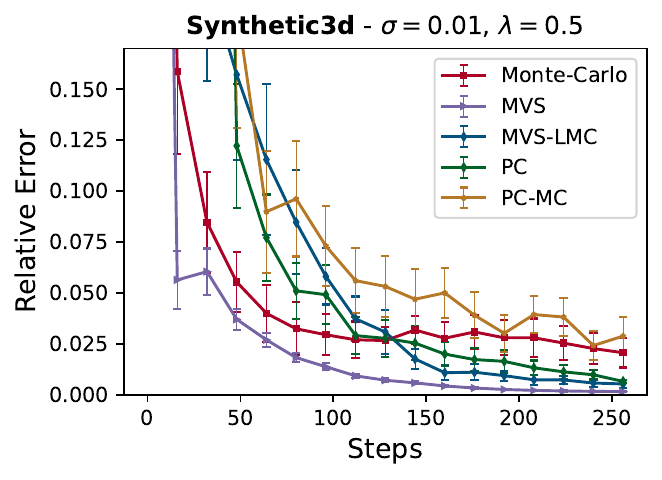}
        \includegraphics[width=\sizecustom]{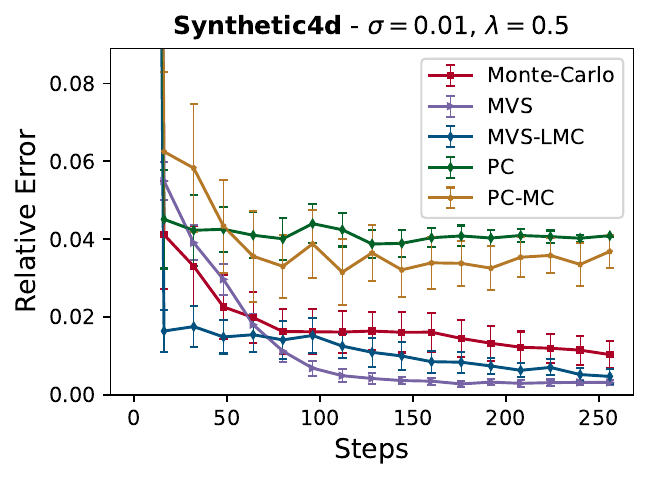}
    \end{subfigure}
    
    \caption{Results for analytic functions. \label{fig:analytic}}
\end{figure*}

In Algorithm \ref{alg:two-phase-nc}, we initially employ $\frac{T}{2}$ samples to construct a GP approximation of $f$ in a non-adaptive manner (i.e., the selection of $\xv_1,\ldots,\xv_{T/2}$ occurs prior to observing $y_1,\ldots,y_{T/2}$).  This gives us an estimate $\mu_{T/2}(\cdot)$ of the entire function, from which we can form an initial estimate $\hat{Z}_1$ of $Z$.  We then refine the estimate by using Monte Carlo sampling to estimate a multiplicative ``residual'' term (with estimate $\hat{R}$), and the final estimate $\hat{Z}$ is the product $\hat{Z}_1 \cdot \hat{R}$. 

The two-batch idea is most easily understood in the BQ problem of integrating $f$: Roughly speaking, the error of Monte Carlo decays as $\frac{1}{\sqrt{T}}$ but is also proportional to the function scale itself, so by applying it to the residual, we get the original error times $\frac{1}{\sqrt{T}}$.  The intuition in our NC problem is generally similar, but the non-linearity of $e^{-\lambda f}$ leads us to consider a multiplicative residual, and the analysis becomes more complicated.

As discussed in Section \ref{sec:relate}, \citep{Hol23} has utilized a similar algorithm by selecting grid points and forming a piecewise constant approximation in the first batch, which they find to be optimal for a different once-differentiable function class, but is less suitable for our setting (see Appendix \ref{sec:comparison} and Section \ref{sec:exp}). Simpler variants of this idea have also been used in BQ; see \citep{Cai23} and the references therein.

We summarize the error bounds of Algorithm \ref{alg:two-phase-nc} (as well as the estimate resulting from the first batch alone) in the following two theorems.
\begin{thm} \label{thm:noiseless_upper}
    {\em (Noiseless Upper Bound)} Consider our problem setup with constant parameters $(B,\nu,d,l)$, and time horizon $T\to \infty$.  With probability at least $1-\delta$ (for an arbitrary fixed $\delta \in (0,1)$),
    the relative error incurred by Algorithm \ref{alg:two-phase-nc} with $\xi=0$ has the following upper bounds on \eqref{eq:error}:
    \begin{itemize}
        \item If $\lambda = \Theta(T^{c})$ with $c\le \frac{\nu}{d}$, then $\epsilon = O\big(T^{-\frac{\nu}{d}-\frac{1}{2}+c} \big)$.
        \item If $\lambda = \Theta(\log T)$, then $\epsilon= O\big( T^{-\frac{\nu}{d} -\frac{1}{2}}\log T\big)$.
    \end{itemize}
\end{thm}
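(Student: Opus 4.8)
The plan is to exploit an exact multiplicative decomposition of $Z$ induced by the importance-sampling density used in line~\ref{line:samp}, thereby reducing the relative error to a Monte Carlo concentration problem whose spread is controlled by the $L^\infty$ accuracy of the first-batch posterior mean. Writing $p(\xv) = e^{-\lambda\mu_{T/2}(\xv)}/\Zhat_1$ for the sampling density, I would first record the identity
\begin{equation*}
    Z = \int_D e^{-\lambda\mu_{T/2}(\xv)} e^{-\lambda(f(\xv)-\mu_{T/2}(\xv))}\,d\xv = \Zhat_1 \cdot R, \qquad R := \EE_{\xv\sim p}\big[e^{-\lambda(f(\xv)-\mu_{T/2}(\xv))}\big].
\end{equation*}
In the noiseless setting ($\sigma=0$, $y_t=f(\xv_t)$) the residual estimate reduces to $\Rhat = \frac{2}{T}\sum_{t=T/2+1}^T e^{-\lambda(f(\xv_t)-\mu_{T/2}(\xv_t))}$, an average of $T/2$ i.i.d.\ draws whose mean is exactly $R$. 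Since $\Zhat = \Zhat_1\Rhat$, the relative error collapses to $\Zhat/Z - 1 = (\Rhat-R)/R$, so it suffices to show $|\Rhat-R|/R = O(T^{-\nu/d-1/2+c})$.

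Next I would control the first-batch approximation quality. Because the posterior variance \eqref{eq:posterior_variance} with $\xi=0$ depends only on the query locations and not on the observations, the uncertainty-sampling design $\xv_1,\dots,\xv_{T/2}$ is deterministic, so $\mu_{T/2}$, $p$ and $R$ are all deterministic functions of $f$ and the only randomness lies in the second-batch Monte Carlo sampling. The key quantitative input is the standard noiseless RKHS interpolation bound $|f(\xv)-\mu_{T/2}(\xv)| \le B\,\sigma_{T/2}(\xv)$ combined with the decay of the maximum posterior standard deviation under uncertainty sampling for the Mat\'ern-$\nu$ kernel, namely $\sup_{\xv\in D}\sigma_{T/2}(\xv) = O(T^{-\nu/d})$ (up to logarithmic factors). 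Setting $\beta := \|f-\mu_{T/2}\|_{\infty}$, these give $\beta = O(T^{-\nu/d})$, uniformly over $\|f\|_{\mat}\le B$, which matches the BO simple-regret rate.

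The remaining step is concentration. Under the hypothesis $c\le \nu/d$ (or $\lambda=\Theta(\log T)$) we have $\lambda\beta = O(T^{c-\nu/d}) = O(1)$, so each summand $W_t := e^{-\lambda(f(\xv_t)-\mu_{T/2}(\xv_t))}$ lies in the bounded interval $[e^{-\lambda\beta},e^{\lambda\beta}]$ of width $2\sinh(\lambda\beta) = \Theta(\lambda\beta)$, and likewise $R\in[e^{-\lambda\beta},e^{\lambda\beta}] = \Theta(1)$. Applying Hoeffding's inequality to the $T/2$ i.i.d.\ summands then yields, with probability at least $1-\delta$,
\begin{equation*}
    |\Rhat - R| = O\!\Big(\lambda\beta\sqrt{\tfrac{\log(1/\delta)}{T}}\Big) = O\big(T^{c-\nu/d}\cdot T^{-1/2}\big),
\end{equation*}
and dividing by $R=\Theta(1)$ gives $\epsilon = O(T^{-\nu/d-1/2+c})$; the $\lambda=\Theta(\log T)$ case is identical, the extra $\log T$ propagating directly from $\lambda\beta$.

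The main obstacle I anticipate is the $L^\infty$ control of the second paragraph: converting the greedy uncertainty-sampling guarantee into the uniform rate $\sup_{\xv}\sigma_{T/2}(\xv)=O(T^{-\nu/d})$ for the Mat\'ern RKHS is the technically substantive ingredient, whereas the decomposition and the Hoeffding step are routine. A secondary point requiring care is that the entire argument hinges on $\lambda\beta=O(1)$, so that the exponential weights $W_t$ stay within a constant range; this is precisely where the restriction $c\le\nu/d$ enters, and it is also what keeps $R$ bounded away from $0$ so that the division by $R$ is harmless.
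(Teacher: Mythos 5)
Your proposal is correct and follows essentially the same route as the paper: the identical decomposition $Z=\Zhat_1 R$ with $\Rhat$ unbiased for $R$, the same $L^\infty$ control $\|f-\mu_{T/2}\|_{L^\infty}=O(T^{-\nu/d})$ via the noiseless interpolation bound $|f-\mu_{T/2}|\le B\sigma_{T/2}$ under maximum-variance sampling, and the same condition $\lambda\|f-\mu_{T/2}\|_{L^\infty}=O(1)$ driving the restriction $c\le\nu/d$. The only (immaterial) difference is the concentration step, where you apply Hoeffding's inequality to the bounded summands while the paper bounds $\var_2[\Rhat/R]$ via Hoeffding's moment-generating-function lemma and invokes Chebyshev; both yield $\epsilon=O(\lambda T^{-\nu/d-\frac{1}{2}})$.
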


Comparing to the noiseless lower bound obtained in Theorem \ref{thm:noiseless_lower}, there is a non-negligible gap of $O(\sqrt{T})$.  However, this difference is fairly insignificant when $\nu\gg d$.

\begin{thm} \label{thm:noisy_upper}
    {\em (Noisy Upper Bound)} Consider our problem setup with constant parameters $(B,\nu,d,l,\xi)$, noise standard deviation $\sigma$ that may scale with $T$ (i.e., $\sigma=\Theta(T^a)$), and time horizon $T\to \infty$. Then, the following upper bounds on \eqref{eq:error} hold with probability at least $1-\delta$ for any constant $\delta > 0$:
    \begin{itemize}
        \item If $\lambda=\Theta(T^{c})$ with $a+c \le 0$ and $c<\frac{\nu}{2\nu+d}$, Algorithm \ref{alg:two-phase-nc} yields $\epsilon = O\big(T^{-\frac{\nu}{2\nu+d}-\frac{1}{2}+c}(\log T)^{\frac{\nu}{2\nu+d}} + \sigma T^{-\frac{1}{2}+c}\big)$.
        \item If $\lambda=\Theta(\log T)$ with $a\le 0$, Algorithm \ref{alg:two-phase-nc}  yields $\epsilon = O\big(T^{-\frac{\nu}{2\nu+d}-\frac{1}{2}}(\log T)^{\frac{3\nu+d}{2\nu+d}} + \sigma T^{-\frac{1}{2}}\log T\big)$.
    \end{itemize}
    In addition, the following upper bounds hold with probability at least $1-\frac{1}{T^{\alpha}}$ with $\alpha$ being any fixed constant:
    \begin{itemize}
        \item If $\lambda=\Theta(T^{c})$ with $a+c<\frac{\nu}{2\nu+d}$ and $c<\frac{\nu}{2\nu+d}$, \underline{the intermediate estimate $\Zhat_1$} from Algorithm \ref{alg:two-phase-nc} yields $\epsilon=O\big(T^{-\frac{\nu}{2\nu+d}+c}(\log T)^{\frac{\nu}{2\nu+d}} + \sigma T^{-\frac{\nu}{2\nu+d}+c}(\log T)^{\frac{4\nu+d}{4\nu+2d}}\big)$.
        \item If $\lambda=\Theta(\log T)$ with $a<\frac{\nu}{2\nu+d}$, \underline{the intermediate estimate $\Zhat_1$} from Algorithm \ref{alg:two-phase-nc} yields $\epsilon=O\big(T^{-\frac{\nu}{2\nu+d}}(\log T)^{\frac{3\nu+d}{2\nu+d}} + \sigma T^{-\frac{\nu}{2\nu+d}}(\log T)^{\frac{8\nu+3d}{4\nu+2d}}\big)$.
    \end{itemize}
\end{thm}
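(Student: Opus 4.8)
The plan is to decompose the relative error into a first-batch GP-approximation part (governing $\Zhat_1$) and a Monte-Carlo residual part (governing the final $\Zhat$), and to exploit an exact algebraic cancellation that makes the final error depend \emph{only} on the Monte-Carlo step. First I would control the first-batch accuracy via the uniform $L^\infty$ bound $\Delta := \|f-\mu_{T/2}\|_\infty$. Since the first $T/2$ points are chosen by pure uncertainty sampling in a non-adaptive fashion, the maximum posterior standard deviation obeys $\max_{\xv}\sigma_{T/2}(\xv)=\Otil\big(\sqrt{\gamma_{T/2}/T}\big)$, and combining this with the tight RKHS/GP confidence bound of \citep{Vak21,Vak21a}---which, after a union bound over an $O(\mathrm{poly}(T))$-point discretization of $D$ plus a Lipschitz correction, contributes an $O(\sqrt{\log T})$ factor to the noise-dependent width---gives, via the Mat\'ern information-gain bound $\gamma_{T/2}=\Theta(\gamma_T)=\Otil\big(T^{\frac{d}{2\nu+d}}\big)$, the estimate $\Delta = O\big(T^{-\frac{\nu}{2\nu+d}}\big[(\log T)^{\frac{\nu}{2\nu+d}} + \sigma(\log T)^{\frac{4\nu+d}{4\nu+2d}}\big]\big)$. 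Taking $\log(1/\delta)=\Theta(\log T)$ in the confidence bound makes this hold with probability $1-T^{-\alpha}$.

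For the intermediate estimate I would use the pointwise sandwich $e^{-\lambda\Delta}Z \le \Zhat_1 \le e^{\lambda\Delta}Z$, which follows from $-\lambda\mu_{T/2}=-\lambda f-\lambda(\mu_{T/2}-f)$ and $|\mu_{T/2}-f|\le\Delta$ after integrating over $D$. Whenever $\lambda\Delta=O(1)$---guaranteed by $c<\frac{\nu}{2\nu+d}$ and $a+c<\frac{\nu}{2\nu+d}$, which force both components of $\lambda\Delta$ to vanish---this yields $|\Zhat_1/Z-1|\le e^{\lambda\Delta}-1 = O(\lambda\Delta)$; substituting $\Delta$ and $\lambda=\Theta(T^c)$ reproduces the two $\Zhat_1$ bounds, including the exact log exponents $\frac{\nu}{2\nu+d}$ and $\frac{4\nu+d}{4\nu+2d}$.

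The crucial observation for the final estimate is the exact identity $Z=\Zhat_1\cdot R$ with $R:=\EE_{\xv\sim p}\big[e^{-\lambda(f(\xv)-\mu_{T/2}(\xv))}\big]$ and sampling density $p\propto e^{-\lambda\mu_{T/2}}$, so that $\Zhat/Z=\Rhat/R$ and the entire first-batch bias cancels: the relative error of $\Zhat$ equals the relative Monte-Carlo error of $\Rhat$. Conditioned on the first batch, the summands $W_t=e^{-\lambda^2\sigma^2/2}e^{\lambda\mu_{T/2}(\xv_t)-\lambda y_t}$ are i.i.d.\ with $\EE[W_t]=R$, since the prefactor $e^{-\lambda^2\sigma^2/2}$ exactly de-biases the log-normal noise moment $\EE[e^{-\lambda z_t}]=e^{\lambda^2\sigma^2/2}$. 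Writing $g=f-\mu_{T/2}$, I would compute $\var(W_t)=e^{\lambda^2\sigma^2}\EE_p[e^{-2\lambda g}]-R^2$ and split it as $\var_p(e^{-\lambda g}) + (e^{\lambda^2\sigma^2}-1)\EE_p[e^{-2\lambda g}] = O(\lambda^2\Delta^2+\lambda^2\sigma^2)$, using $|g|\le\Delta$ together with $a+c\le 0$ (so $\lambda^2\sigma^2=O(1)$, hence $R=\Theta(1)$ and the second factor is $O(1)$). Because $R=\Theta(1)$, a Chebyshev (second-moment) bound then gives, with probability $1-\delta$, that $|\Rhat/R-1|=O\big(\sqrt{\var(W_t)/T}\big)=O\big(\lambda(\Delta+\sigma)/\sqrt{T}\big)$, which is exactly the claimed $O\big(T^{-\frac{\nu}{2\nu+d}-\frac12+c}(\log T)^{\frac{\nu}{2\nu+d}}+\sigma T^{-\frac12+c}\big)$ after substituting $\Delta$.

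The main obstacle is the concentration of $\Rhat$: since $e^{-\lambda z_t}$ is log-normal, the $W_t$ are heavy-tailed (their moment generating function diverges at positive arguments), so sub-Gaussian/sub-exponential tools do not apply and a second-moment argument cannot push the confidence beyond a fixed $1-\delta$. This is precisely why the sharper $\sqrt{T}$-improved bound on $\Zhat$ holds only with constant probability, whereas the coarser $\Zhat_1$ bound---whose randomness is confined to the first-batch noise---enjoys the $1-T^{-\alpha}$ guarantee. The remaining technical bookkeeping is to maintain $\lambda\Delta=O(1)$ and $\lambda^2\sigma^2=O(1)$ throughout (ensured by the stated ranges of $a,c$) and to combine the first-batch event with the Monte-Carlo event by a union bound, noting that the $\sigma$-dependent part of $\lambda\Delta/\sqrt{T}$ is dominated by the noise term $\sigma T^{-\frac12+c}$ and so is absorbed.
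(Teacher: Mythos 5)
Your proposal is correct and follows essentially the same route as the paper's proof: the same two-batch decomposition with the exact residual identity $Z=\Zhat_1\cdot R$ and $\Zhat/Z=\Rhat/R$, the same unbiasedness-plus-Chebyshev argument for the second batch (hence the constant-probability guarantee), the same non-adaptive $L^\infty$ confidence bound of Vakili et al.\ combined with the Mat\'ern information-gain rate, and the same $e^{\lambda\Delta}-1=O(\lambda\Delta)$ treatment of the intermediate estimate $\Zhat_1$. The only (immaterial) difference is that you bound $\var_{\rho}\big(e^{-\lambda g}\big)$ directly from the range $[e^{-\lambda\Delta},e^{\lambda\Delta}]$, whereas the paper bounds the ratio $\EE[e^{2\lambda g}]/\EE[e^{\lambda g}]^2$ via Hoeffding's lemma together with Jensen's inequality; both give the same $O(\lambda^2\Delta^2+\lambda^2\sigma^2)$ variance bound under the conditions $\lambda\Delta=O(1)$ and $\lambda\sigma=O(1)$.
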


The second part of the theorem helps to broaden the range of allowed $(a,c)$ pairs.  However, when $(a,c)$ is feasible for the first part, it gives a stronger result than the second part, improving by a $O(\frac{1}{\sqrt{T}})$ factor in the non-$\sigma$ term and by $O(\frac{d}{4\nu+2d})$ in the $\sigma$ term.  This highlights the benefit of using both batches, at least in terms of upper bounds.

Comparing to the noisy lower bound obtained in Theorem \ref{thm:noisy_lower}, the convergence rate obtained in Theorem \ref{thm:noisy_upper} is less straightforward, but we give some analysis as follows:
\begin{itemize}
    \item For the first dot points in Theorem \ref{thm:noisy_lower} and Theorem \ref{thm:noisy_upper}, the two results align under the high noise regime when $a\ge -\frac{\nu}{2\nu+d}$ (this threshold on $a$ is obtained by equating the exponents $-\frac{\nu}{2\nu+d}-\frac{1}{2}+c$ and $a-\frac{1}{2}+c$ from Theorem \ref{thm:noisy_upper}).  In this regime, the order of the error is optimal as $\Theta(\sigma T^{-\frac{1}{2}+c})$.  This matches the order-optimal bound for BQ (upon replacing $c$ by zero, since $\lambda$ is absent in BQ).  As a specific example, as shown in Table \ref{tab:results}, when we assume $a=-\frac{1}{4}$ and $\nu\ge\frac{d}{2}$ (which ensures $-\frac{1}{4}\ge -\frac{\nu}{2\nu+d}$), the resulting upper bound is optimal at $\Theta(T^{-\frac{3}{4}+c})$.
    \item For an extreme low noise regime ($a \le -\frac{\nu}{d}-\frac{1}{2}$, which ensures that the first term in Theorem \ref{thm:noisy_lower} dominates the second term), the lower bound vs.~the upper bound is $\Omega(T^{-\frac{\nu}{d}-1+c})$ vs.~$O(T^{-\frac{\nu}{2\nu+d}-\frac{1}{2}+c})$.  Thus, the relative gap is $O(T^{\frac{d}{4\nu+2d}+\frac{\nu}{d}})$, which is around $O(\sqrt{T})$ if $d\gg \nu$.
    \item Otherwise, if $ -\frac{\nu}{d}-\frac{1}{2}<a<-\frac{\nu}{2\nu+d}$, the lower bound vs. the upper bound is $\Omega(\sigma T^{-\frac{1}{2}+c})$ vs. $O(T^{-\frac{\nu}{2\nu+d}-\frac{1}{2}+c})$.  The relative gap is now $O(T^{-\frac{\nu}{2\nu+d}-a})$, which primarily depends on the value of $a$ if $d \gg \nu$.  In this sense, the gap ranges from $O(T^{o(1)})$ to $O(T)$.  To illustrate an example under this condition, see Table \ref{tab:results} with the choice $a=-\frac{1}{2}$.
    \item The last two bullet points in the theorem address the remaining scenario when $\lambda\sigma\to \infty$ (i.e., $a+c>0$), and the term $\frac{\nu}{2\nu + d}$ in the exponent matches that observed for simple regret in the BO literature \citep{Sca17,Vak21}, though failing to match the upper bound for NC.  Hence, the aforementioned noisy upper bounds demonstrate that, albeit with some gaps, when $\lambda\sigma\to 0$, the derived upper bound shares similarities with BQ, whereas when $\lambda\sigma\to \infty$, the bound shares similarities with BO.
\end{itemize}

\begin{figure*}[!t]
\centering
    \begin{subfigure}{\linewidth}
        \centering
        \includegraphics[width=\sizecustom]{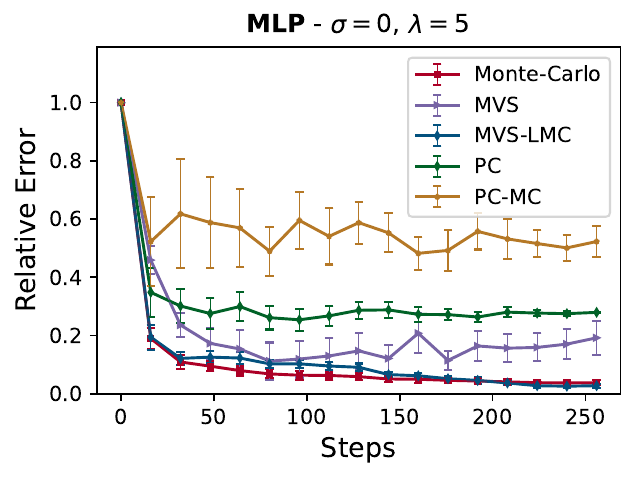}
        \includegraphics[width=\sizecustom]{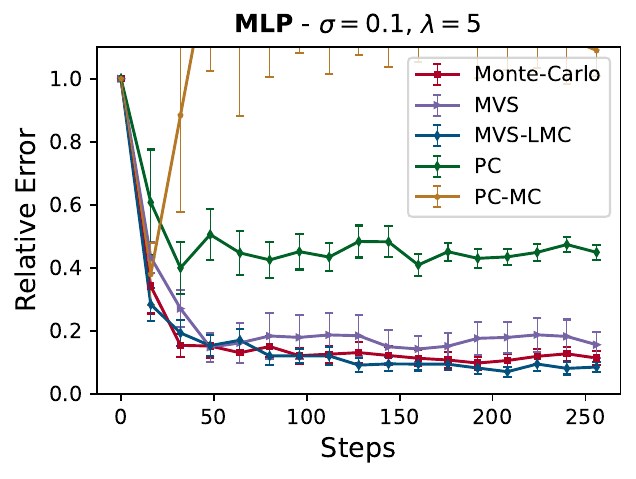}
        \includegraphics[width=\sizecustom]{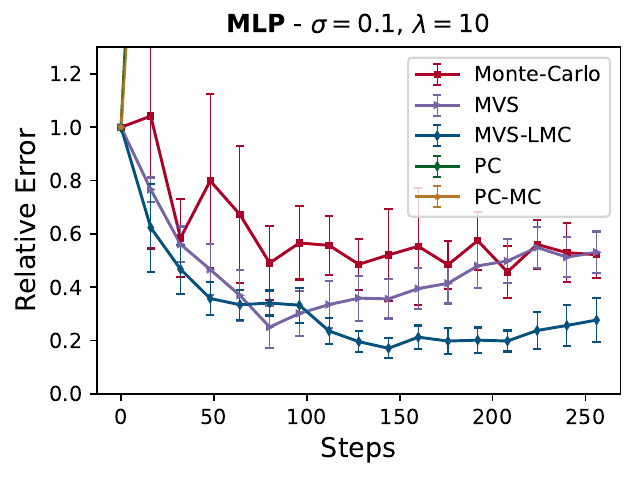}
    \end{subfigure}
    \caption{Results for MLP. \label{fig:mlp}}
\end{figure*}

\section{Experiments} \label{sec:exp}
In this section, we conduct simulation studies to investigate Algorithm \ref{alg:two-phase-nc}  and its intermediate estimate $\Zhat_1$. 

\subsection{Setup}
{\bf \noindent Sampling in the Second Batch.} As shown in line 8 of Algorithm \ref{alg:two-phase-nc}, the target distribution is proportional to $e^{-\lambda\mu_{T/2}(\xv)}$, which can be challenging to sample exactly.  Fortunately, there is a vast literature on approximate sampling methods that we can use.  We choose to use the method of Langevin dynamics defined by the following stochastic differential equation (SDE) \citep{Uhl30}:
\begin{equation}\label{eq:sde}
    d \Xv(t) = -\nabla g(\Xv(t)) dt + \sqrt{2\lambda^{-1}} \Bv(t),
\end{equation}
where $\lambda>0$ is interpreted as the inverse temperature, and $\Bv(t)\in \RR^d$ is the Brownian motion at time $t$.  
A standard approach to solve \eqref{eq:sde} is to apply Euler-Maruyama discretization, leading to the following {\em Langevin Monte-Carlo} (LMC) updating rule:
\begin{equation}\label{eq:lmc}
    \xv_{t+1} = \xv_{t} - \beta\nabla \mu_{T/2}(\xv_t) + \sqrt{2\beta\lambda^{-1}} \beps_t,
\end{equation}
where we have replaced $g$ with $\mu_{T/2}$, $\beps_t$ are i.i.d. standard Gaussian random vectors in $\RR^d$, and $\beta>0$ is the step size of the discretization.  Note that using this sampling strategy incurs some approximation error that we do not attempt to account for in our theory (analogous to how BO theory assumes exact acquisition function optimization).

\vskip 0.1in
{\bf \noindent Hyperparameters.} For all functions considered in this section, we consider a time horizon of $T = 256$, $\lambda\in\{0.5, 5, 10\}$, $\sigma\in\{0, 0.01, 0.1\}$ and $\nu\in\{0.5,1.5,2.5\}$.  The total number of steps of \eqref{eq:lmc} is set as $20$, and the LMC learning rate is $\beta = 10^{-3}$.  We adopt two learnable kernel hyperparameters in \eqref{eq:kMat}, the lengthscale $l$ and an additional scale parameter (multiplying $k$), to permit functions with varying ranges (while $\nu$ remains fixed).  Except for synthetic functions where the true hyperparameters are known, these two parameters are optimized by maximizing the data log-likelihood \citep{Ras06} using the built-in SciPy optimizer based on L-BFGS-B, which is also used for finding the maximum variance point in Algorithm \ref{alg:two-phase-nc}.

\vskip 0.1in
{\bf \noindent Benchmarks.}  In addition to the commonly adopted Monte-Carlo quadrature baseline, as discussed in Sections \ref{sec:relate} and \ref{sec:upper}, the most closely related work by  \citep[Sec.~5.1]{Hol23} proposes the use of piecewise constant approximation to estimate NC with grid inputs, which also achieves improved theoretical convergence when combined with an additional MC step.  We adopt their shorthand notations and refer to these two benchmarks as PC and PC-MC, respectively.

\vskip 0.1in
{\bf \noindent Evaluation.}  We refer to the first batch of Algorithm \ref{alg:two-phase-nc} as maximum variance sampling (MVS),\footnote{More precisely, MVS corresponds to taking all $T$ samples based on the maximum variance rule, not just the first $T/2$.} and the whole Algorithm \ref{alg:two-phase-nc} as MVS-LMC.  We evaluate the performance using the mean absolute relative error, with the ground truth value (and also $\Zhat_1$ at Line 11 of Algorithm \ref{alg:two-phase-nc}) being determined by trapezoidal rule with $10^5$ uniformly-spaced grid points (without noise).  Error bars in our plots indicate $\pm 0.5$ standard deviation with respect to the 100 trials.

\subsection{Analytic Functions}\label{sec:exp_analytic}
In order to assess the empirical behaviour of MVS and MVS-LMC, we first conduct experiments on the following analytic functions for $d\in\{1,2,3,4\}$:

{\bf  Synthetic functions.}  The synthetic functions are constructed by sampling $m=30d$ points, $\xvhat_1\ldots\xvhat_m$, uniformly on $[0,1]^d$, and $\ahat_1\ldots\ahat_m$ uniformly on $[-1,1]$.  The function is then defined as $f(\xv) = \sum_{i=1}^{m} \ahat_i k(\xvhat_i, \xv)$.  The length-scale and $\nu$ are set to be fixed (no hyperparameter learning) as $0.2$ and $2.5$ respectively.

{\bf  Benchmark functions.}  Exact formulations of functions including, Ackley, Alpine, Product-Peak, Zhou, etc., can be found in \citep{SFU_Funcs}.  $\nu$ is fixed as $3/2$ for all of these benchmark functions.

\subsection{Multi-layer Perception (MLP)}
For a more complex scenario, we consider an 8-dimensional MLP function, with the structure being defined by
\begin{equation*}
    f(\xv) = f^4(\Tanh(f^3(\Tanh(f^2(\Tanh(f^1(\xv))))))),
\end{equation*}
where $\Tanh(\cdot): \RR^n\to [-1,1]^n$ is the Hyperbolic tangent (tanh) activation function, and the dimension mapping from layers $f^1$ to $f^4$ is $8\to16\to32\to16\to1$.  We use Xavier initialization to set and fix the weights of the MLP, and set $\nu = 1/2$ to model the potentially more erratic behavior.

\ifthenelse{\boolean{ARXIV}}
{
\begin{figure}[!t]
    \centering
    \begin{subfigure}[t]{\sizecustom}
        \centering
        \includegraphics[width=\columnwidth]{./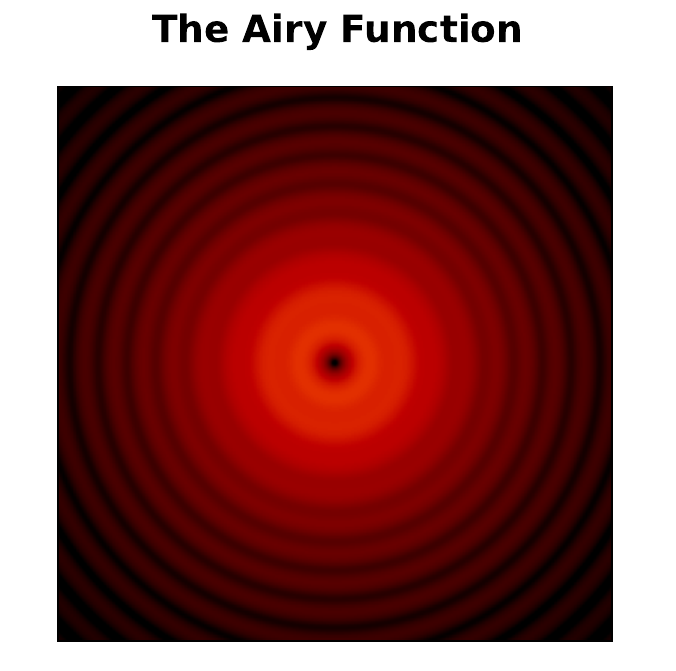}
        \caption{PSF computed for the wavelength $2\times 10^{-6}$.}
        \label{fig:airy}
    \end{subfigure}
    \begin{subfigure}[t]{\sizecustom}
        \centering
        \includegraphics[width=\columnwidth]{./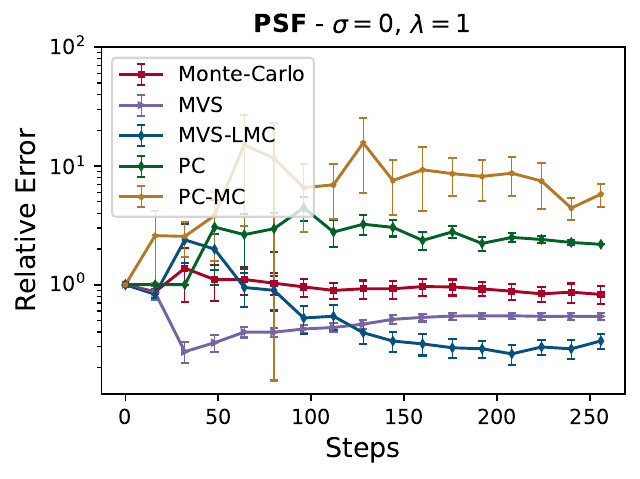}
        \caption{PSF error plot.}
        \label{fig:psf_error}
    \end{subfigure}
    \begin{subfigure}[t]{\sizecustom}
        \centering
        \includegraphics[width=\columnwidth]{./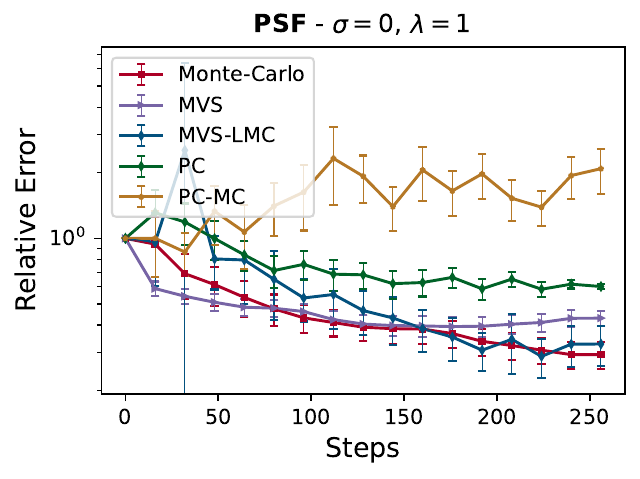}
        \caption{PSF error plot by shifting $(+0.05,+0.05)$, so that the new optimum is at $(- 0.05, - 0.05)$.}
        \label{fig:psf_error_shift}
    \end{subfigure}
    
    \caption{Results for estimating PSF. \label{fig:psf}}
\end{figure}
}
{
\begin{figure}[!t]
    \centering
    \begin{subfigure}[t]{\columnwidth}
        \centering
        \includegraphics[width=0.45\columnwidth]{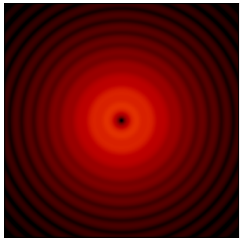}
        \caption{PSF computed for the wavelength $2\times 10^{-6}$.}
        \label{fig:airy}
    \end{subfigure}
    \vskip 0.05in
    \begin{subfigure}[t]{\columnwidth}
        \centering
        \includegraphics[width=0.6\columnwidth]{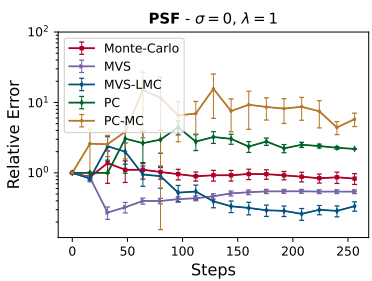}
        \caption{PSF error plot.}
        \label{fig:psf_error}
    \end{subfigure}
    \begin{subfigure}[t]{\columnwidth}
        \centering
        \includegraphics[width=0.6\columnwidth]{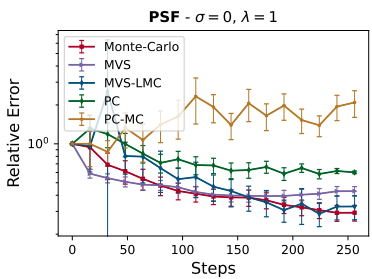}
        \caption{PSF error plot by shifting $(+0.05,+0.05)$, so that the new optimum is at $(- 0.05, - 0.05)$.}
        \label{fig:psf_error_shift}
    \end{subfigure}
    
    \caption{Result for estimating PSF. \label{fig:psf}} \vspace*{-2ex}
\end{figure}
}
\subsection{Point Spread Function (PSF)}

Beyond functions with analytic forms, we have also simulate on a diffraction energy distribution characterised by an intensity of wave-field (i.e., PSF).  This idea leads to an interesting class of functions for black-box problems, e.g., 
it has been previously evaluated in BQ tasks \citep{Nas21}.  The PSF will be dependent on the shape of the pupil (circle, rectangle, etc.), and on the wavelength of the used light.  In the case of a circular pupil, the diffracted wave pattern is known as the Airy pattern, where in our case, the logarithm of the energy intensity is regarded as a 2-D black-box energy function.  The Airy pattern generated by light with wavelength $2\times 10^{-6}$ is displayed in Figure \ref{fig:airy}, and we perform NC estimation within a quarter of the support, namely $[0, 0.5] \times [0, 0.5]$ (top-right corner of Figure  \ref{fig:airy}). Additionally, Figure \ref{fig:psf_error} displays the error curves (plotted with $\nu=0.5$, $\sigma=0$, and $\lambda=1$) using a logarithmic scale to capture the behavior of all the algorithms more effectively.

\subsection{Discussion}
As can be observed from the displayed figures, the performance of MVS and MVS-LMC outperforms MC, PC, and PC-MC by varying margins.  We find that despite the PC approach working well for higher query budgets \citep{Hol23} (e.g., $10^5$ or more), GP-based methods tend to be superior at low budgets (e.g., a few hundred).  This is in alignment with the strong query complexity properties of GP methods observed in other tasks (e.g., regression, optimization, etc.).

In particular, the number of dimensions can significantly impact the performance of PC under a small budget, as observations become more sparse in higher dimensions.  For example, in the Alpine1d plot, PC converges quickly, while it converges slowly for Synthetic4d.  On the other hand, in the PSF experiment, PC suffers from a relative error around 3 due to consistently sampling from the $(0,0)$ point, which has a significantly higher value than other locations. As seen in Figure \ref{fig:psf_error_shift}, this can be improved by shifting $f$ so that the peak is away from $(0,0)$, but only to a limited extent.


When comparing MVS and MVS-LMC to MC on simple analytic functions (see Figure \ref{fig:analytic}), the empirical behavior is consistent with our theory; that is, for a small inverse temperature $\lambda$ and noise variance $\sigma^2$, both MVS and MVS-LMC exhibit a substantial improvement over MC.  However, for larger values of $\lambda$ and $\sigma$, the error of MVS-LMC essentially reduces to $O(T^{-\frac{1}{2}})$, the same order as MC.

We observe that MVS-LMC works particularly well for the MLP and PSF functions, whose results are shown in Figures \ref{fig:mlp} and \ref{fig:psf_error}.  While the total samples are split half-by-half in Algorithm \ref{alg:two-phase-nc} for simplicity, the performance of MVS-LMC could be further improved in practice by choosing a problem-dependent split size, similarly to BQ in \citep{Cai23}.  We also note that MVS usually also works well, indicating that the LMC component is not always necessary, but the LMC component clearly helps in some cases (specifically, for MLP and PSF).

\subsection{Effect of Varying $\lambda$}

Our theoretical results indicate that the complexity of NC increases with higher values of $\lambda$.  This trend is also observed empirically, as demonstrated by the MLP plots in Figure \ref{fig:mlp}.  See also Appendix \ref{app:add_exp} for additional results of this kind.

\section{Conclusion}

Our work contributes to the understanding of the estimation of the normalizing constant for functions in an RKHS, and provides insights into the relationship between the error bound, the problem parameter $\lambda$, and the noise variance $\sigma^2$.  In general it is still an open question to what extent our bounds can be improved.  Our upper bounds on the convergence were primarily established using $L^{\infty}$ bounds in BO, which leads to an extra $\Theta(\sqrt{T})$ factor compared to our lower bounds. Improvements may be possible if we can instead build on $L^2$ function approximation bounds.  It would also be of interest to better understand the {\em squared exponential} (SE) kernel, for which some of our techniques become infeasible (e.g., the use of disjoint bump functions in the lower bound).


\section*{Acknowledgments} This work was supported by the Singapore National Research Foundation (NRF) under grant number A-0008064-00-00.

\bibliography{ref}

\newpage
\onecolumn 

{\huge \bf \centering Appendix \par}
\bigskip
{\large \bf \centering \bf Kernelized Normalizing Constant Estimation: \\ Bridging  Bayesian Quadrature and Bayesian Optimization (AAAI 2024) \par}

\medskip
{\large \centering Xu Cai and Jonathan Scarlett \par}
\medskip

\appendix
\section{Proofs of Lower Bounds}

\subsection{Function Class Construction} \label{app:class}
Let $h(\xv) = \exp\big(\frac{-1}{1-\|\xv\|^2}\big)$ for $\|\xv\|\le 1$ and $h(\xv)=0$ for $\|\xv\|> 1$,  and let $g(\xv)=\frac{-\eta}{h(\bzero)}h(\frac{2\xv}{w})$ be the scaled and reflected version of $h(\xv)$, with width $w$ and height $\eta$.  We can fill the domain $D$ with
\begin{equation}\label{eq:M_w}
    M= \Big\lfloor \frac{1}{w}\Big\rfloor^d
\end{equation}
disjoint functions $\{g_i\}_{i=1}^{M}$, where each $g_i(\xv)$ is the shifted version of $g(\xv)$.  Then, we can define a function class containing $2^M$ possible functions of the form $f(\xv)=\sum_{i=1}^{M}S_i g_i(\xv)$, where $S_i\in\{0,1\}$.  We consider the uniform distribution over this class, which is equivalent to each $S_i$ being an independent Bernoulli random variable with probability $1/2$.  According to \citep{Bul11,Cai21}, the RKHS norm of a single bump $g(\xv)$ is bounded by 
\begin{equation}
    \|g(\xv)\|_{\mat} \le O\Big(\frac{\eta}{w^{\nu}}\Big). \label{eq:g_rkhs}
\end{equation}
More generally, for a function with up to $M$ disjoint bumps (as we consider), we can bound the overall RKHS norm of $f$ as follows under the mild condition $\nu+\frac{d}{2}\ge1$ \citep[App. C.1]{Cai23}:
\begin{align}
    \|f(\xv)\|_{\mat} \le \sqrt{M} \|g\|_{\mat}.
\end{align}
Combining \eqref{eq:M_w} and \eqref{eq:g_rkhs}, it follows that $\|f(\xv)\|_{\mat}\le B$ with a choice of $\eta$ satisfying
\begin{equation}
    \eta = \Theta\Big(\frac{B}{M^{\frac{\nu}{d}+\frac{1}{2}}}\Big). \label{eq:eta_M}
\end{equation}

Given the disjointedness of the regions, we use $Z_i = \int_{D_i} e^{-\lambda S_i g_i(\xv)}d\xv$ to denote the normalizing constant of $f$ in the $i$-th region $D_i$.  We refer to region $D_i$ as being of \emph{type 0} if $S_i=0$, and otherwise we refer to $D_i$ as being of \emph{type 1}.   The values of $Z_i$ for type 0 regions are simply $w^d$, since the function value is 0.  By denoting the integration gap between type 0 and type 1 regions as $\Delta Z$, for any $f$ in the constructed hard function class, the overall normalizing constant can be obtained as
\begin{equation}
    Z(f) = \int_D e^{-\lambda f(\xv)} d\xv = 1 + \Delta Z  \sum_{i=1}^M S_i, \label{eq:nc_value}
\end{equation}
due to the fact that function values in type 1 regions are translated copies of $g(\xv)$ and the regions are disjoint.  Furthermore, we can lower bound $\Delta Z$ as follows:
\begin{align}
    \Delta Z &= \int_{[-\frac{w}{2},\ldots,-\frac{w}{2}]}^{[\frac{w}{2},\ldots,\frac{w}{2}]} \big(e^{-\lambda g(\xv)} -1\big) d\xv \nonumber\\
    &\ge 2^d \int_{\bzero}^{[\frac{w}{4},\ldots,\frac{w}{4}]} \big(e^{-\lambda g(\xv)} -1\big) d\xv \nonumber\\
    &\ge 2^d \int_{\bzero}^{[\frac{w}{4},\ldots,\frac{w}{4}]} \big(e^{-\lambda g([\frac{w}{4},\ldots,\frac{w}{4}])} -1\big) d \xv \nonumber\\
    &= \frac{w^d}{2^d} \big(e^{\lambda \eta \frac{h([\frac{1}{2},\ldots,\frac{1}{2}])}{h(\bzero)}} -1 \big) \nonumber\\
    &= c_1 w^d (e^{c_2 \lambda \eta} - 1), \quad c_1:=\frac{1}{2^d}, \quad c_2:=\frac{h([\frac{1}{2},\ldots,\frac{1}{2}])}{h(\bzero)}, \label{eq:deltaz_lb}
\end{align}
where the first three steps use the symmetry of $g(\xv)$, and the fact that $-\lambda g(\xv)$ is monotonically decreasing with respect to $\|\xv\|\in(0,1)$.  The second last step is a result of integrating a constant over a rectangular domain.  

Similarly, the upper bound of $\Delta Z$ can be derived as
\begin{align}
    \Delta Z &= \int_{[-\frac{w}{2},\ldots,-\frac{w}{2}]}^{[\frac{w}{2},\ldots,\frac{w}{2}]} \big(e^{-\lambda g(\xv)} -1\big) d\xv \nonumber\\
    &\le 2^d \int_{\bzero}^{[\frac{w}{2},\ldots,\frac{w}{2}]} \big(e^{-\lambda g(\bzero)} -1\big) d\xv \nonumber\\
    &= w^d (e^{\lambda \eta} - 1). \label{eq:deltaz_ub}
\end{align}

By substituting the above upper bound into \eqref{eq:nc_value} and recalling \eqref{eq:M_w}, we obtain an upper bound on the overall normalizing constant, which is given by 
\begin{equation}\label{eq:nc_ub}
    Z(f) \le O\big(e^{\lambda \eta}\big).
\end{equation}

\subsection{Unified Proof of Theorem \ref{thm:noiseless_lower} and Theorem \ref{thm:noisy_lower} (Noiseless and Noisy Lower Bounds)} \label{app:lb}

As hinted in Section \ref{sec:setup}, estimating the relative error defined in \eqref{eq:error} is equivalent to estimating the mean absolute error of \eqref{eq:nc_value} when $Z(f)=\Theta(1)$.\footnote{This can be seen by considering $\epsilon_1 = |\xhat-x|$ and $\epsilon_2 = |\frac{\xhat-x}{x}|$; then $\epsilon_1 = x\epsilon_2$, which scales as $\Theta(\epsilon_2)$ if $x=\Theta(1)$.}  Consequently, when deriving the lower bound, our focus lies on the scenario where $\lambda\eta$ is bounded (typically approaching zero) in \eqref{eq:nc_ub}, while still allowing for cases that $\lambda\to\infty$.  Broadly speaking, we follow a similar idea of the lower bound proofs given in \citep{Cai23}, reducing the problem of estimating $Z$ to estimating the sum of independent $\{0,1\}$ valued random variables. 

\subsubsection{Relating Estimation Error to Posterior Variance}

Conditioned on observed samples, we are interested in characterizing the posterior distribution of $\sum_{i=1}^{M}S_i$, which can be formalized using the techniques provided in \citep[App. D]{Cai23}, as summarized below.  

\begin{lem}\label{lem:noisy_horizon}
    {\em (Adapted from \citep[App.~D]{Cai23} )} Let $f=\sum_{i=1}^{M}S_i g_i(\xv)$ be the prescribed function with $M$ disjoint regions, where the $M$ signs have an uniform prior over all $2^M$ sign patterns.  Consider any (possibly adaptive) deterministic algorithm, with queries corrupted by independent noise $\Nc(0,\sigma^2)$.  If 
    \begin{equation}\label{eq:noisy_T}
        T< c_3\cdot M \cdot \max \Big\{1,\frac{\sigma^2}{\eta^2}\Big\}
    \end{equation}
    for sufficiently small constant $c_3$, then with at least constant probability, there exists at least a constant fraction of $M$ regions indexed by  $i=1,\dotsc,M$ such that $S_i$ has strictly positive posterior variance given the $T$ samples.
\end{lem}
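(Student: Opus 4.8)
The plan is to exploit two structural features of the hard instance: the bumps $g_i$ have \emph{disjoint} supports, and the signs $S_i$ are \emph{independent} under the uniform prior. Together these imply that a query at $\xv_t$ is informative about $S_i$ only when $\xv_t$ lies in the active part of region $D_i$, so information about distinct signs is never shared within a single observation. The goal is to show that a constant fraction of the $M$ signs retain an $\Omega(1)$ posterior variance, i.e.\ that the posterior mean of $S_i$ stays bounded away from $0$ and $1$; since $S_i$ is Bernoulli, this is precisely the statement that the posterior does not collapse onto one value. I would split the argument according to which term attains the maximum in \eqref{eq:noisy_T}.

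For the noiseless/low-noise case ($\sigma^2/\eta^2 \le 1$, so that \eqref{eq:noisy_T} reads $T < c_3 M$) a direct counting argument suffices. In the noiseless model an observation $y_t = S_i g_i(\xv_t)$ reveals $S_i$ exactly when $g_i(\xv_t)\neq 0$ and reveals nothing ($y_t=0$) otherwise, so each of the $T$ queries can resolve at most one region. Consequently at least $M - T$ regions are never queried inside their support, and for each such region the posterior of $S_i$ equals its uniform prior, giving posterior variance $\frac14$. With $T < c_3 M$ this leaves at least $(1-c_3)M$ unresolved regions, deterministically.

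For the noise-dominated case I would instead use a mutual-information budget, which is robust to adaptivity. A single Gaussian observation of a signal of amplitude at most $\eta$ carries $O(\eta^2/\sigma^2)$ nats about the corresponding sign, since the KL divergence between $\Nc(0,\sigma^2)$ and $\Nc(g_i(\xv_t),\sigma^2)$ is $g_i(\xv_t)^2/(2\sigma^2)\le \eta^2/(2\sigma^2)$. Applying the chain rule over the $T$ adaptively chosen rounds, and using disjointness to argue that each $Y_t$ depends on at most one sign, gives $I(S_{1:M};Y_{1:T}) = O(T\eta^2/\sigma^2)$. Because the signs are independent, $\sum_i I(S_i;Y_{1:T}) \le I(S_{1:M};Y_{1:T})$, so substituting the noise-dominated form of \eqref{eq:noisy_T} yields an average per-region information of $O(c_3)$. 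Markov's inequality then shows that at most an $O(c_3)$ fraction of regions can have $I(S_i;Y_{1:T})$ above a fixed threshold; for the remaining constant fraction, $H(S_i \mid Y_{1:T})$ is close to $\log 2$, which forces the posterior mean toward $\frac12$ and hence the posterior variance to be $\Omega(1)$ with at least constant probability.

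I expect the main obstacle to be the adaptivity together with the passage from an \emph{averaged} information/entropy bound to a statement holding \emph{simultaneously for a constant fraction of regions with constant probability}. Adaptivity is tamed by phrasing everything through mutual information and the chain rule, so that no explicit accounting of the random per-region sample counts is needed, and by invoking disjointness to ensure each observation touches a single sign. The delicate quantitative step, ``small $I(S_i;Y_{1:T})$ on average $\Rightarrow$ near-uniform posterior for many $i$ with constant probability'', is where the constants must be chosen with care: one combines the Markov bound over $i$ with a reverse-Markov (Paley--Zygmund-type) step on the posterior variance, following \citep[App.~D]{Cai23}, whose reduction from estimating $Z$ to estimating $\sum_i S_i$ I would reuse directly.
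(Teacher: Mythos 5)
Your proposal is correct in outline, but note that the paper itself does not supply a proof of this lemma: it is imported wholesale from \citep[App.~D]{Cai23}, and the surrounding text only records the two consequences it needs (a constant fraction of signs with non-degenerate posteriors, plus conditional independence of the $S_i$ given the data from \citep[Lem.~5]{Cai23}). Measured against that source, your reconstruction is essentially the standard argument. The case split is the right one: when $\sigma^2\le\eta^2$ the bound $T<c_3M$ is handled by the deterministic counting argument (each query lies in the support of at most one bump, so at least $(1-c_3)M$ signs retain their prior Bernoulli$(1/2)$ posterior and variance $\tfrac14$), and this part cannot be subsumed by the information budget since $\eta^2/\sigma^2$ blows up there. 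For the noise-dominated case, the chain-rule bound $I(S_{1:M};Y_{1:T})\le T\eta^2/(2\sigma^2)$ for adaptive deterministic queries, the superadditivity $\sum_i I(S_i;Y_{1:T})\le I(S_{1:M};Y_{1:T})$ from prior independence, Markov over $i$, and a reverse-Markov step to pass from $H(S_i\mid Y_{1:T})$ close to $\log 2$ on average to a near-uniform posterior with constant probability are all sound; the final passage to a \emph{simultaneous} constant fraction follows from linearity of expectation applied to the count of ``collapsed'' regions, exactly as you indicate. The cited proof in \citep{Cai23} phrases the same budget via likelihood ratios and the collected signal energy $\sum_{t:\xv_t\in D_i}g_i(\xv_t)^2$ per region rather than mutual information, but the two bookkeeping devices are interchangeable here and yield the same conclusion; your version arguably handles adaptivity more transparently. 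One small point of care: the lemma's conclusion is used downstream as posterior variance $\Omega(1)$ (to get \eqref{eq:var_z}), not merely ``strictly positive,'' so you should make explicit that your entropy threshold yields a uniform constant lower bound on the variance, which your argument does deliver.
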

This lemma indicates that, conditioned on the observed samples, when $T$ satisfies \eqref{eq:noisy_T}, at least a constant fraction of these indices have strictly positive posterior variance.  In addition, the $S_i$ are known to be conditionally independent given the $T$ samples,  due to the use of disjoint bumps in the function class \citep[Lem.~5]{Cai23}.  Combining these observations and using  \eqref{eq:nc_value}, it follows that with constant probability the posterior variance of estimating $Z$ satisfies
\begin{equation}\label{eq:var_z}
    \var_T[Z] = \var_T\Big[1+\Delta Z \sum_{i=1}^{M}S_i\Big] = (\Delta Z)^2 \var_T\Big[\sum_{i=1}^{M}S_i\Big] = \Omega\big(M (\Delta Z)^2 \big),
\end{equation}
where $\var_T[\cdot]$ denotes variance given the $T$ samples, and we subsequently define $\EE_T[\cdot]$ analogously. 

Recall from Appendix \ref{app:class} that $ M= \lfloor \frac{1}{w}\rfloor^d$.  By substituting the lower bound on $\Delta Z$ in \eqref{eq:deltaz_lb} into \eqref{eq:var_z}, the lower bound for the posterior standard deviation is:
\begin{equation}
    \Omega\Big(\Delta Z \sqrt{M}\Big) = \Omega\Big(c_1 w^d (e^{c_2\lambda\eta}-1)\sqrt{M}\Big) = \Omega\Big(c_1(e^{c_2\lambda\eta}-1)M^{-\frac{1}{2}}\Big) = \Omega\Big(\lambda\eta M^{-\frac{1}{2}}\Big). \label{eq:posterior_std}
\end{equation}

We now argue that in the limit of large $M$ (which grows simultaneously with $T$), the mean absolute error of estimating \eqref{eq:nc_value} will be proportional to the above posterior standard deviation. To see this, we apply the central limit theorem for independent (but not necessarily identically distributed) Bernoulli random variables $S_i$, which yields that the following quantity converges to the standard Gaussian distribution as $M\to\infty$:
\begin{equation}
    \frac{Z-\EE_T[Z]}{\sqrt{\var_T[Z]}} = \frac{1+\Delta Z\sum_{i=1}^{M}S_i - \EE_T[1+\Delta Z\sum_{i=1}^{M}S_i]}{\Delta Z \sqrt{\var_T[\sum_{i=1}^{M}S_i]}} = \frac{\sum_{i=1}^{M}S_i - \EE_T[\sum_{i=1}^{M}S_i]}{\sqrt{\var_T[\sum_{i=1}^{M}S_i]}} \sim \Nc(0,1),
\end{equation}
where we have substituted \eqref{eq:var_z}.  Namely, $Z$ is asymptotically distributed as $\Nc(\EE_T[Z], \var_T[Z])$, and thus no matter what value the algorithm outputs as the estimate, there is always a constant probability that the correct value differs from it by $\Theta( \sqrt{\var[Z]} )$.\footnote{In particular, outputting an estimate of $\EE_T[Z]$ is asymptotically optimal, but $|Z - \EE_T[Z]|$ asymptotically follows a folded normal distribution, whose mean is $\sqrt{\frac{2\var_T[Z]}{\pi}}$.}

\subsubsection{Analysis of Noiseless Lower Bound}
The noiseless lower bounds can be obtained from the first term in Lemma \ref{lem:noisy_horizon} with $\sigma^2=0$, and accordingly considering $T = \Theta(M)$.  Recalling the relation between $\eta$ and $M$ in \eqref{eq:eta_M}, it can be seen that $\lambda \eta = \Theta(T^{-\frac{\nu}{d}-\frac{1}{2}+c})$.  Then we have:
\begin{itemize}
    \item When $\lambda = \Theta(T^c)$ with $c\le \frac{\nu}{d}+\frac{1}{2}$:  the upper bound in \eqref{eq:nc_ub} is $O(e^{\lambda \eta})=O(1)$.  Therefore, we can directly use the posterior standard deviation in \eqref{eq:posterior_std} as the relative error, and get the lower bound $\epsilon=\Omega(\lambda\eta M^{-\frac{1}{2}}) = \Omega(T^{-\frac{\nu}{d}-1+c})$.  
    \item When $\lambda = \Theta(\log T)$:  we have that $e^{\lambda \eta} = (e^{\log T})^{\eta} = T^{\Theta\big(T^{-\frac{\nu}{d}-\frac{1}{2}}\big)}$, being asymptotically $1$.  The error is then as least $\Omega(T^{-\frac{\nu}{d}-1} \log T) $.
\end{itemize}

\subsubsection{Analysis of Noisy Lower Bound}\label{app:noisy_lb}

If the maximum of \eqref{eq:noisy_T} is achieved by the second term, $T=\Theta(M\frac{\sigma^2}{\eta^2})$, we get $\eta M^{-\frac{1}{2}}=\Theta(\sigma T^{-\frac{1}{2}} )$, so that substitution into \eqref{eq:posterior_std} simply gives
\begin{equation*}
    \epsilon = \Omega\big(\lambda \sigma T^{-\frac{1}{2}}\big),
\end{equation*}
under the condition $\lambda\eta\to 0$ (so that \eqref{eq:nc_ub} scales as $\Theta(1)$).  We now justify the range of $\lambda$ and $\sigma$ to achieve this.  By reorganizing $\eta=\Theta(M^{-\frac{\nu}{d}-\frac{1}{2}})$ in \eqref{eq:eta_M} as $\eta^2=\Theta(M^{-\frac{2\nu}{d}-1})$, we obtain $\frac{M}{\eta^2} = \Theta(M^{-\frac{2\nu+2d}{d}})$.  Combining this with $T=\Theta\big( M \frac{\sigma^2}{\eta^2}\big)$, we deduce
\begin{equation}
    M=\Theta\Big(\sigma^{-\frac{2d}{2\nu+2d}} T^{\frac{d}{2\nu+2d}} \Big),
\end{equation}
and hence 
\begin{equation}
    \eta=\Theta\Big(M^{-\frac{2\nu+d}{2d}}\Big)=\Theta\Big(\sigma^{\frac{2\nu+d}{2\nu+2d}} T^{-\frac{2\nu+d}{4\nu+4d}} \Big).
\end{equation}
Letting $\lambda=\Theta(T^c)$ and $\sigma=\Theta(T^a)$, this yields
\begin{equation}\label{eq:lamb_eta}
    \lambda\eta = \Theta\Big(T^c M^{-\frac{\nu}{d}-\frac{1}{2}}\Big) = \Theta\Big(T^{\frac{2\nu+d}{2\nu+2d}a} T^{c-\frac{2\nu+d}{4\nu+4d}}\Big).
\end{equation}
For convenience in simplifying \eqref{eq:lamb_eta}, we assume $a\le \frac{1}{2}$,\footnote{This could potentially be generalized to certain cases with $a>\frac{1}{2}$, but we believe that this would be of less interest, since $\Theta(\sqrt{T})$ already amounts to an unusually high noise level.} so that as long as $c<0$, \eqref{eq:lamb_eta} reduces to $\lambda\eta=O(1)$, and consequently, \eqref{eq:nc_ub} scales as $\Theta(1)$.  Therefore, we have when $\lambda\to 0$ (e.g. $\lambda=\Theta(T^c)$ with $c < 0$), the lower bound is simply
\begin{equation*}
    \epsilon = \Omega\Big(\lambda T^{-\frac{\nu}{d}-1} + \sigma \lambda T^{-\frac{1}{2}}\Big).
\end{equation*}

On the other hand, when $\lambda=\Theta(T^c)$ with $c\ge 0$, in order to attain $\lambda\eta=\Theta(1)$, it is sufficient that $c\le (\frac{1}{2}-a) \frac{2\nu+d}{2\nu+2d}$.  Together with the noiseless term, the value of $c$ is now constrained to values in $\big[0, \min\{\frac{\nu}{d}+\frac{1}{2}, (\frac{1}{2}-a) \frac{2\nu+d}{2\nu+2d}\}\big]$, yielding the following lower bound:
\begin{equation}\label{eq:noisy_lower}
    \epsilon = \Omega\Big(T^{-\frac{\nu}{d}-1+c} + \sigma T^{-\frac{1}{2}+c}\Big), \quad c\in \Big[0, \min\Big\{\frac{\nu}{d}+\frac{1}{2}, \big(\frac{1}{2}-a\big) \frac{2\nu+d}{2\nu+2d}\Big\}\Big].
\end{equation}

By comparing $\frac{\nu}{d}+\frac{1}{2}$ with $(\frac{1}{2}-a) \frac{2\nu+d}{2\nu+2d}$, we find that
\begin{align*}
    \frac{\nu}{d}+\frac{1}{2} &\le \Big(\frac{1}{2}-a\Big) \frac{2\nu+d}{2\nu+2d} \\
     \Longleftrightarrow \frac{2\nu+d}{2\nu+2d}a &\le \frac{2\nu+d}{4\nu+4d}-\frac{1}{2}-\frac{\nu}{d} \\
     \Longleftrightarrow \frac{2\nu+d}{2\nu+2d}a &\le \frac{2\nu d+d^2-2\nu d-2d^2-4\nu^2-4\nu d}{(4\nu+4d)d} \\
     \Longleftrightarrow \frac{2\nu+d}{2\nu+2d}a &\le \frac{-(2\nu+d)^2}{(4\nu+4d)d} \\
     \Longleftrightarrow a &\le -\frac{2\nu +d}{2d} =-\frac{\nu}{d}-\frac{1}{2},
\end{align*}
meaning that when $\sigma=\Theta(T^{a})$ with $a\le -\frac{\nu}{d}-\frac{1}{2}$, the error in \eqref{eq:noisy_lower} is dominated by its first term, and the lower bound becomes the noiseless lower bound (and we are free to choose any $c\le \frac{\nu}{d}+\frac{1}{2}$).  In fact, the same condition can be obtained by directly comparing the two powers $-\frac{\nu}{d}-1$ and $a-\frac{1}{2}$.  

On the other hand, when $a> -\frac{\nu}{d}-\frac{1}{2}$, the dominating term is $\Omega(\sigma T^{-\frac{1}{2} +c})$ with $c\le (\frac{1}{2}-a) \frac{2\nu+d}{2\nu+2d}$.  In particular, when 
\begin{equation}\label{eq:c-a}
    c= \Big(\frac{1}{2}-a\Big) \frac{2\nu+d}{2\nu+2d},
\end{equation}
which is the maximum allowed value, the lower bound can be simplified as
\begin{equation}
    \Omega\Big(T^{a-\frac{1}{2} + (\frac{1}{2}-a)\frac{2\nu+d}{2\nu+2d}}\Big) = \Omega\Big(T^{ (a-\frac{1}{2}) \frac{d}{2\nu+2d}}\Big). \label{eq:lb_with_a}
\end{equation}

Therefore, we have the following lower bound instances by choosing different noise levels of interest (still with $a>-\frac{\nu}{d}-\frac{1}{2}$) in \eqref{eq:lb_with_a}:
\begin{itemize}
    \item We first consider negative $a$ cases. For example, with $a=-\frac{1}{4}$ and $a=-\frac{1}{2}$, the results are already shown in Table \ref{tab:results} (though with a general choice of $\lambda$).
    \item Another special case is by choosing $a = -\frac{\nu}{d}+\frac{1}{2}$ (assuming $\nu>\frac{d}{2}$, so we have $a<0$), for which the lower bound becomes \fbox{$\Omega(T^{-\frac{\nu}{2\nu+2d}})$} by substituting the value of $a$ into \eqref{eq:lb_with_a}.  From \eqref{eq:c-a}, this implies that the order of $\lambda$ is now at $\lambda=\Theta(T^{\frac{\nu}{d}\frac{2\nu+d}{2\nu+2d}})$.  For comparison, the existing BO noisy lower bound is $\epsilon=\Theta(\sigma^{\frac{2\nu}{2\nu+d}} T^{-\frac{\nu}{2\nu+d}})$ \citep{Sca17}. Thus by substituting $a=-\frac{\nu}{d}+\frac{1}{2}$, the BO noisy lower bound becomes $\Omega(T^{\frac{-2\nu+d}{2d}\frac{2\nu}{2\nu+d}-\frac{\nu}{2\nu+d}}) = \Omega(T^{-\frac{2\nu^2}{(2\nu+d)d}})$, indicating that NC is strictly harder than BO for these choices $a$ and $c$, since $\frac{\nu}{2\nu+2d}<\frac{2\nu^2}{(2\nu+2d)d}<\frac{2\nu^2}{(2\nu+d)d}$ when $\nu>\frac{d}{2}$.  In other words, while NC typically requires fewer samples than BO for small $\lambda$, in can require strictly more for large $\lambda$. 
    \item When $a=0$, we arrive at the {\em most studied case}, $\sigma=\Theta(1)$.  As already discussed in Section \ref{sec:lb}, the lower bound becomes \fbox{$\Omega(T^{ -\frac{d}{4\nu+4d}})$} (Table \ref{tab:results} also shows a general result) in the case that $\lambda=\Theta(T^{\frac{2\nu+d}{4\nu+4d}})$.  Similar to the above, the NC error can be higher than BO if $\nu\ge\frac{d}{2}$ (when $\sigma=\Theta(1)$, the BO lower bound is $\epsilon=\Omega(T^{-\frac{\nu}{2\nu+d}})$). 
    \item As an example of a positive $a$ value (i.e., large noise), setting $a=\frac{1}{4}$ gives \fbox{$\epsilon=\Omega\big(T^{-\frac{d}{8\nu+8d}}\big)$}.
    \item When $a=\frac{1}{2}$, we are restricted to choose $c=0$, and obtain \fbox{$\epsilon=\Omega(1)$}.  This corresponds to having an overly large noise level that prohibits accurate estimation.
    \item When $\lambda=\Theta(\log T)$ and $a\le \frac{1}{2}$, $e^{c_2\lambda\eta}$ scales as $\Theta(1)$ when $T\to \infty$, and thus the error becomes \fbox{$\Omega\big(T^{-\frac{\nu}{d}-1}\log T + \sigma T^{-\frac{1}{2}}\log T\big)$}.
\end{itemize}

\section{Proofs of Upper Bounds}

\subsection{Deriving the Variance of Relative Error}
With the calculation of the estimated residual $\Rhat$ in Algorithm \ref{alg:two-phase-nc} (line 12), the true residual value $R$ (given $\mu_{T/2}$) is determined by
\begin{equation}\label{eq:true_R}
    R:= \frac{Z}{\Zhat_1} = \frac{\int_D e^{-\lambda f(\xv)}d\xv}{\int_D e^{-\lambda\mu_{T/2}(\xv)}d\xv} = \int_D e^{\lambda\mu_{T/2}(\xv) - \lambda f(\xv)} \frac{e^{-\lambda\mu_{T/2}(\xv)}}{\int_D e^{-\lambda\mu_{T/2}(\xv)}d\xv} d\xv = \EE_{\rho(\mu)} [e^{\lambda\mu_{T/2}(\xv) - \lambda f(\xv)}],
\end{equation}
where we denote $\rho(\mu) = \frac{e^{-\lambda\mu_{T/2}(\xv)}}{\int_D e^{-\lambda\mu_{T/2}(\xv)}d\xv} $.  It is easy to verify that the expectation of $\Rhat$ w.r.t.~the second batch samples (denoted by $\EE_2[\cdot]$) is unbiased:
\begin{align}
    \EE_2[\Rhat] &=  \EE_{\rho(\mu)} \bigg[\EE_{z_t} \Big[\frac{2}{Te^{\lambda^2\sigma^2/2}}\sum_{t=T/2+1}^{T} e^{\lambda\mu_{T/2}(\xv_t)-\lambda f(\xv_t)-\lambda z_t} \Big] \bigg] \nonumber\\
    &= \frac{2}{T}\sum_{t=1}^{T/2} \EE_{\rho(\mu)}\bigg[e^{\lambda\mu_{T/2}(\xv_t)-\lambda f(\xv_t)} \bigg] \nonumber\\
    &= R, \label{eq:unbias}
\end{align}
where the second step uses the fact that the moment generating function of a normal random variable $z_t\sim\Nc(0,\sigma^2)$ is $\EE_{z_t}[e^{\lambda z_t}]= e^{\lambda^2\sigma^2/2}$.  Moreover, taking the samples from the first batch into consideration, we can upper bound the relative error by relative variance, as shown below.
\begin{lem}\label{lem:eps_var}
(Upper Bound via Variance)  Under the setup and notation of Algorithm \ref{alg:two-phase-nc}, for any $\delta_2>0$, with probability at least $1-\delta_2$ with respect to the second batch of samples, the following holds:
\begin{equation}
    \epsilon \le \sqrt{\frac{1}{\delta_2}\var_2\Big[\frac{\Rhat}{R}\Big]},
\end{equation}
where $\var_2[\cdot]$ denotes variance based on the samples in the second batch alone.

\end{lem}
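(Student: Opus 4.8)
The plan is to reduce the relative error to a centered deviation of the Monte-Carlo residual estimator and then invoke Chebyshev's inequality. First I would work conditionally on the first batch, so that $\mu_{T/2}$, and hence both $\Zhat_1 = \int_D e^{-\lambda\mu_{T/2}(\xv)}d\xv$ and $R = Z/\Zhat_1$ from \eqref{eq:true_R}, are treated as fixed deterministic quantities; the only randomness is then that of the second batch. Note that $\Zhat_1 > 0$ and $R = \EE_{\rho(\mu)}[e^{\lambda\mu_{T/2}(\xv)-\lambda f(\xv)}] > 0$, since both are integrals/expectations of strictly positive integrands, so all ratios below are well defined. Substituting $\Zhat_1 = Z/R$ into the output $\Zhat = \Zhat_1\cdot\Rhat$ yields the exact identity $\frac{\Zhat}{Z} = \frac{\Rhat}{R}$, and therefore the relative error of the final estimate is precisely
\begin{equation*}
    \Big|\frac{\Zhat}{Z} - 1\Big| = \Big|\frac{\Rhat}{R} - 1\Big|.
\end{equation*}

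Next I would exploit the unbiasedness already established in \eqref{eq:unbias}, namely $\EE_2[\Rhat] = R$, which gives $\EE_2\big[\frac{\Rhat}{R}\big] = 1$. Hence the relative error is exactly the centered deviation $\big|\frac{\Rhat}{R} - \EE_2\big[\frac{\Rhat}{R}\big]\big|$, and Chebyshev's inequality applies directly: for any $t>0$,
\begin{equation*}
    \PP_2\Big(\Big|\frac{\Rhat}{R} - 1\Big| \ge t\Big) \le \frac{1}{t^2}\var_2\Big[\frac{\Rhat}{R}\Big].
\end{equation*}
Choosing $t = \sqrt{\frac{1}{\delta_2}\var_2[\Rhat/R]}$ makes the right-hand side equal to $\delta_2$, so that on the complementary event, which has probability at least $1-\delta_2$ with respect to the second batch, we have $\big|\frac{\Rhat}{R}-1\big| < t$, i.e.\ $\epsilon \le \sqrt{\frac{1}{\delta_2}\var_2[\frac{\Rhat}{R}]}$, which is the claimed bound.

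The argument itself is elementary, so the only points that need care are bookkeeping: the conditioning that renders $\Zhat_1$ and $R$ deterministic, the positivity guaranteeing the ratios are defined, and the fact that the statement is made for the fixed $f$ under consideration (the worst-case/uniform version over $f\in\mat$ in \eqref{eq:error} follows once $\var_2[\Rhat/R]$ is controlled uniformly in $f$). The genuinely hard part is not this lemma but its sequel: the whole utility of the bound hinges on obtaining a sharp estimate of $\var_2[\Rhat/R]$, which requires analysing the variance of the importance-sampling-style estimator $\Rhat = \frac{2}{T e^{\lambda^2\sigma^2/2}}\sum_{t} e^{\lambda\mu_{T/2}(\xv_t)-\lambda y_t}$ under both the noise $z_t$ and the sampling distribution $\rho(\mu)$, and relating the resulting second moment of $e^{\lambda(\mu_{T/2}-f)}$ to the $L^\infty$ approximation error $\|\mu_{T/2}-f\|_\infty$ via the GP/BO bounds of \citep{Vak21,Vak21a}. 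That variance computation, rather than the present Chebyshev step, is where the main effort of the upper-bound analysis lies.
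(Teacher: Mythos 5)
Your proof is correct and follows essentially the same route as the paper: the identity $\Zhat/Z = \Rhat/R$, unbiasedness from \eqref{eq:unbias}, and a direct application of Chebyshev's inequality with the threshold $\sqrt{\var_2[\Rhat/R]/\delta_2}$. The extra bookkeeping you note (conditioning on the first batch and positivity of $R$ and $\Zhat_1$) is implicit in the paper's proof and does not change the argument.
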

\begin{proof}
    The proof is a straightforward application of Chebyshev's inequality. We first observe that:
    \begin{equation*}
        \frac{\Zhat}{Z} = \frac{\Zhat_1\Rhat}{Z} = \frac{\Rhat}{R},
    \end{equation*}
    and from \eqref{eq:unbias}, we have:
    \begin{equation*}
    \EE_2\Big[\frac{\Rhat}{R}\Big] = 1.
    \end{equation*}
    As $\frac{\Rhat}{R}$ is a random variable for the second batch samples, we can use Chebyshev's inequality to show that, for any $\delta_2 > 0$:
    \begin{equation*}
        \PP\bigg[\Big|\frac{\Rhat}{R} - \EE_2\Big[\frac{\Rhat}{R}\Big]\Big| \ge \frac{1}{\sqrt{\delta_2}}\sqrt{\var_2\Big[\frac{\Rhat}{R}\Big]}\bigg] \le \delta_2.
    \end{equation*}
\end{proof}

By fixing the first batch samples and hence $R$, we can derive the variance of $\frac{\Rhat}{R}$ with respect to second batch samples:
\begin{align}
    \var_2\Big[\frac{\Rhat}{R}\Big] &= \var_2\bigg[\frac{\frac{2}{Te^{\lambda^2\sigma^2/2}}\sum_{t=T/2+1}^{T}e^{\lambda\mu_{T/2}(\xv_t)-\lambda f(\xv_t)-\lambda z_t}}{R}\bigg] \nonumber\\
    &= \frac{4}{T^2 e^{\lambda^2\sigma^2} R^2} \var_2\bigg[\sum_{t=T/2+1}^{T}e^{\lambda\mu_{T/2}(\xv_t)-\lambda f(\xv_t)-\lambda z_t}\bigg] \nonumber\\\
    &= \frac{4}{T^2 e^{\lambda^2\sigma^2}  R^2} \sum_{t=T/2+1}^{T} \var_2\big[e^{\lambda\mu_{T/2}(\xv_t)-\lambda f(\xv_t)-\lambda z_t}\big] \nonumber\\
    &= \frac{4}{T^2 e^{\lambda^2\sigma^2}  R^2} \sum_{t=T/2+1}^{T} \Big( \EE_2\big[e^{2\lambda\mu_{T/2}(\xv_t)-2\lambda f(\xv_t)-2\lambda z_t}\big] - \EE_2\big[e^{\lambda\mu_{T/2}(\xv_t)-\lambda f(\xv_t)-\lambda z_t}\big]^2 \Big) \nonumber\\
    &= \frac{2}{T}\frac{\EE_{z_t}\big[e^{2\lambda z_t}\big] \EE_{\rho(\mu)}\big[e^{2\lambda\mu_{T/2}(\xv)-2\lambda f(\xv)}\big] - \EE_{z_t}\big[e^{\lambda z_t}\big]^2\EE_{\rho(\mu)}\big[e^{\lambda\mu_{T/2}(\xv)-\lambda f(\xv)}\big]^2}{e^{\lambda^2\sigma^2} R^2}  \nonumber\\
    &= \frac{2}{T} \bigg(\frac{e^{\lambda^2\sigma^2} \EE_{\rho(\mu)}\big[e^{2\lambda\mu_{T/2}(\xv)-2\lambda f(\xv)}\big]}{\EE_{\rho(\mu)}\big[e^{\lambda\mu_{T/2}(\xv)-\lambda f(\xv)}\big]^2} - 1\bigg), \label{eq:error_upper_mid}
\end{align}
where the third and fifth step use the independence and identical properties of $\xv_t$ and $z_t$ across $t$, and the last step follows by noting that $\EE_{z_t}[e^{\lambda z_t}]= e^{\lambda^2\sigma^2/2}$ (since $z_t \sim \Nc(0,\sigma^2)$) and using \eqref{eq:true_R}.

\subsection{Variance Bounds Based on $L^{\infty}$ Norm}
The following lemma will be useful for further upper bounding \eqref{eq:error_upper_mid} using $L^{\infty}$-norm based confidence bounds on $f$.

\begin{lem} \label{lem:chern}
    {\em (Hoeffiding's Lemma; \citep[Lem. 2.2]{Bou04}~)}
    Let $Y$ be a random variable taking values in a bounded interval $[a,b]$.  Then for all $\lambda>0$, 
    \begin{equation}  \label{eq:hoeff}
        \EE[e^{\lambda (Y-\EE[Y])}] \le e^{\frac{\lambda^2(a-b)^2}{8}}.
    \end{equation}
\end{lem}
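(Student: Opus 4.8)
The plan is to give the textbook proof of Hoeffding's lemma, combining the convexity of the exponential with a second-order Taylor expansion of the associated cumulant generating function. Write $m = \EE[Y]$ and set $X = Y - m$, so that $\EE[X] = 0$ and $X$ takes values in $[a-m, b-m]$; the goal is to bound $\EE[e^{\lambda X}]$ for $\lambda > 0$.

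First I would exploit convexity. Since $y \mapsto e^{\lambda y}$ is convex and $Y \in [a,b]$, expressing $Y$ as the convex combination $Y = \frac{b-Y}{b-a}a + \frac{Y-a}{b-a}b$ gives the pointwise inequality
\begin{equation*}
  e^{\lambda Y} \le \frac{b-Y}{b-a}e^{\lambda a} + \frac{Y-a}{b-a}e^{\lambda b}.
\end{equation*}
Taking expectations and multiplying through by $e^{-\lambda m}$ yields
\begin{equation*}
  \EE\big[e^{\lambda(Y-m)}\big] \le e^{-\lambda m}\Big( \tfrac{b-m}{b-a}e^{\lambda a} + \tfrac{m-a}{b-a}e^{\lambda b}\Big).
\end{equation*}

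Next I would reparametrize with $p = \frac{m-a}{b-a} \in [0,1]$ and $h = \lambda(b-a) > 0$. Using $a - m = -p(b-a)$, the right-hand side above equals $e^{-ph}\big((1-p) + pe^{h}\big)$, so that $\EE[e^{\lambda(Y-m)}] \le e^{L(h)}$ where
\begin{equation*}
  L(h) = -ph + \log\big(1 - p + pe^{h}\big).
\end{equation*}
It then suffices to show $L(h) \le h^2/8$. A direct computation gives $L(0) = 0$ and $L'(h) = -p + \frac{pe^h}{1-p+pe^h}$, hence $L'(0) = 0$. Writing $u(h) = \frac{pe^h}{1-p+pe^h} \in [0,1]$, one finds $L''(h) = u(h)\big(1 - u(h)\big) \le \tfrac14$. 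Taylor's theorem with Lagrange remainder then gives $L(h) = \tfrac12 L''(\xi)h^2 \le h^2/8$ for some $\xi$ between $0$ and $h$; substituting $h = \lambda(b-a)$ and using $(a-b)^2 = (b-a)^2$ gives the claim.

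The only mildly delicate step is the bound $L''(h) \le \tfrac14$, and the key observation is that $L''(h)$ is precisely the variance $u(1-u)$ of a $\mathrm{Bernoulli}(u)$ random variable, which is maximized at $u = \tfrac12$ by the AM--GM inequality. I expect no substantive obstacle here, since this is a classical result quoted essentially verbatim from \citep{Bou04}; the proposal above merely records the standard argument for completeness.
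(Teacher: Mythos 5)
Your argument is correct and complete. Note, however, that the paper does not prove this statement at all---it is quoted verbatim as Lemma 2.2 of the cited reference \citep{Bou04}---so there is no in-paper proof to compare against. Your route is the classical convexity argument: bound $e^{\lambda Y}$ by the chord of the exponential over $[a,b]$, reparametrize with $p=\frac{\EE[Y]-a}{b-a}$ and $h=\lambda(b-a)$, and show the resulting function $L(h)=-ph+\log(1-p+pe^h)$ satisfies $L(0)=L'(0)=0$ and $L''=u(1-u)\le\frac14$, whence $L(h)\le h^2/8$ by Taylor's theorem. The computations check out (in particular $\lambda(a-m)=-ph$ and $\lambda(b-m)=(1-p)h$, so the chord bound does collapse to $e^{-ph}(1-p+pe^h)$, and $L''(h)=u'(h)=u(1-u)$ is indeed a Bernoulli variance). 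For what it is worth, the proof in \citep{Bou04} itself takes a slightly different but equally standard route: it works directly with the cumulant generating function $\psi(\lambda)=\log\EE[e^{\lambda Y}]$, identifies $\psi''(\lambda)$ as the variance of $Y$ under the exponentially tilted measure, bounds that variance by $(b-a)^2/4$ for any $[a,b]$-valued variable, and applies Taylor's theorem to $\psi$; your version avoids introducing the tilted measure at the cost of the explicit chord/reparametrization step, and both yield the same constant $\frac18$. The only implicit assumption worth flagging is $b>a$ (you divide by $b-a$), but the degenerate case $a=b$ is trivial since then $Y$ is almost surely constant and the left-hand side equals $1$.
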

We use the above lemma with the replacement of $Y$ by $\mu_{T/2}(\xv) - f(\xv)$ (with the randomness being $\xv\sim \rho(\mu)$), and the fact that $\mu_{T/2}(\xv) - f(\xv)\in [-\|\mu_{T/2}(\xv)-f(\xv)\|_{L^{\infty}},\|\mu_{T/2}(\xv)-f(\xv)\|_{L^{\infty}} ]$, and get
\begin{align}
    \EE_{\rho(\mu)}[e^{\lambda (\mu_{T/2}(\xv)-f(\xv))}] &\le e^{\EE_{\rho(\mu)}[\lambda\mu_{T/2}(\xv)-\lambda f(\xv)] + \frac{\lambda^2\|\mu_{T/2}(\xv)-f(\xv)\|_{L^{\infty}}^2}{2}}.  \label{eq:hoeff_1}
\end{align}

We now bound the denominator and numerator of \eqref{eq:error_upper_mid} separately.  By Jensen's inequality, the denominator is easily lower bounded as follows:
\begin{equation}
    \EE_{\rho(\mu)}\big[e^{\lambda\mu_{T/2}(\xv)-\lambda f(\xv)}\big]^2  \ge e^{2\EE_{\rho(\mu)}[\lambda\mu_{T/2}(\xv)-\lambda f(\xv)]}. \label{eq:error_denom}
\end{equation}
For the numerator of \eqref{eq:error_upper_mid}, we replace $\lambda$ by $2\lambda$ in \eqref{eq:hoeff_1}, and obtain
\begin{align}
    e^{\lambda^2\sigma^2} \EE_{\rho(\mu)}\big[e^{2\lambda\mu_{T/2}(\xv)-2\lambda f(\xv)}\big] \le  e^{\lambda^2\sigma^2 + 2 \EE_{\rho(\mu)}[\lambda\mu_{T/2}(\xv)-\lambda f(\xv)] + 2\lambda^2 \|\mu_{T/2}(\xv)-f(\xv)\|_{L^{\infty}}^2}. \label{eq:error_numer}
\end{align}
Combining \eqref{eq:error_upper_mid}, \eqref{eq:error_denom} and \eqref{eq:error_numer}, we obtain
\begin{align} 
    \var_2\Big[\frac{\Rhat}{R}\Big] &\le \frac{2}{T} \Big( e^{\lambda^2\sigma^2 + 2\lambda^2\|\mu_{T/2}(\xv) - f(\xv)\|_{L^{\infty}}^2} - 1\Big). \label{eq:var_ub}
\end{align}
In subsequent sections, we will specify the range of $\lambda$ in order to approximate \eqref{eq:var_ub} via 
\begin{equation}\label{eq:ex}
    e^x = 1+ O(x), \quad \text{when } x = O(1).    
\end{equation}

\subsection{Noiseless Upper Bounds (Proof of Theorem \ref{thm:noiseless_upper})}
In the noiseless case, the following lemma shows that the posterior variance provides an upper bound on the $L^{\infty}$ error for a fixed target function.
\begin{lem}\label{lem:noiseless_conf_bound}
    {\em (Noiseless Confidence Intervals \citep[Cor.~3.11]{kan18})}
    For any $f\in\mat(B)$, after $t$ noiseless observations, $L_t(\xv)\le f(\xv)\le U_t(\xv)$ holds with probability one for any $\xv\in D$, where
    \begin{align*}
        U_t(\xv) &= \mu_{t-1}(\xv) + B \sigma_{t-1}(\xv), \\
        L_t(\xv) &= \mu_{t-1}(\xv) - B \sigma_{t-1}(\xv),
    \end{align*}
    and where $\mu_{t-1}(\cdot)$ and $\sigma_{t-1}(\cdot)$ are given in \eqref{eq:posterior_mean}--\eqref{eq:posterior_variance} with $\xi = 0$.
\end{lem}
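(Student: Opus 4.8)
The plan is to recognize this as the standard RKHS kernel-interpolation error bound (the ``power function'' bound) and to derive it directly from the reproducing property rather than from any probabilistic argument; the phrase ``with probability one'' simply reflects that once the sample locations are fixed the bound is deterministic. Throughout I work in the RKHS $\Hc$ with inner product $\langle\cdot,\cdot\rangle_{\Hc}$, and write $V=\mathrm{span}\{k(\cdot,\xv_i)\}_{i=1}^{t-1}$ for the subspace spanned by the kernel sections at the observed points, with $P_V$ the orthogonal projection onto $V$.

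First I would match the GP posterior formulas \eqref{eq:posterior_mean}--\eqref{eq:posterior_variance} (specialized to $\xi=0$) to the corresponding interpolation-theoretic quantities. For the mean, the key observation is that $\mu_{t-1}=P_V f$: by \eqref{eq:posterior_mean} the function $\mu_{t-1}$ lies in $V$ and interpolates the noiseless data exactly, so the residual $g:=f-\mu_{t-1}$ obeys $\langle g, k(\cdot,\xv_i)\rangle_{\Hc}=g(\xv_i)=0$ for each $i$ by the reproducing property, whence $g\perp V$ and $\mu_{t-1}$ is indeed the orthogonal projection. For the variance, a short computation gives $\sigma_{t-1}^2(\xv)=\|k(\cdot,\xv)-P_V k(\cdot,\xv)\|_{\Hc}^2$: writing $P_V k(\cdot,\xv)=\sum_j c_j k(\cdot,\xv_j)$, the normal equations yield $c=\Kv_{t-1}^{-1}\kv_{t-1}(\xv)$, and expanding the squared norm using orthogonality of the residual reproduces $k(\xv,\xv)-\kv_{t-1}(\xv)^T\Kv_{t-1}^{-1}\kv_{t-1}(\xv)$, which is exactly \eqref{eq:posterior_variance} at $\xi=0$.

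Next I would combine these two facts. Since $g\perp V$ we have $\langle g, P_V k(\cdot,\xv)\rangle_{\Hc}=0$, so the reproducing property gives
\begin{equation*}
    f(\xv)-\mu_{t-1}(\xv)=g(\xv)=\langle g,\, k(\cdot,\xv)\rangle_{\Hc}=\langle g,\, k(\cdot,\xv)-P_V k(\cdot,\xv)\rangle_{\Hc},
\end{equation*}
and Cauchy--Schwarz bounds this by $\|g\|_{\Hc}\,\|k(\cdot,\xv)-P_V k(\cdot,\xv)\|_{\Hc}=\|g\|_{\Hc}\,\sigma_{t-1}(\xv)$. Finally, because $\mu_{t-1}=P_V f$ is an orthogonal projection, the Pythagorean identity $\|f\|_{\Hc}^2=\|\mu_{t-1}\|_{\Hc}^2+\|g\|_{\Hc}^2$ gives $\|g\|_{\Hc}\le\|f\|_{\Hc}\le B$. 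Putting these together yields $|f(\xv)-\mu_{t-1}(\xv)|\le B\,\sigma_{t-1}(\xv)$, which rearranges to $L_t(\xv)\le f(\xv)\le U_t(\xv)$ as claimed.

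I expect the only delicate step to be the bookkeeping in identifying $\sigma_{t-1}^2(\xv)$ with the squared norm of the projection residual of $k(\cdot,\xv)$; everything else is a routine application of the reproducing property, Cauchy--Schwarz, and the Pythagorean theorem. Since invertibility of $\Kv_{t-1}$ is implicit in \eqref{eq:posterior_mean}--\eqref{eq:posterior_variance}, I would also note that the argument presumes distinct sample points so that the Gram matrix is nonsingular, and remark that at an already-queried point both $\sigma_{t-1}(\xv)$ and the error vanish, so the bound holds trivially there.
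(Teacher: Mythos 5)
Your proof is correct and is essentially the canonical argument behind the cited result ([Cor.~3.11] of \citep{kan18}), which the paper itself does not reprove: identifying $\mu_{t-1}$ with the orthogonal projection $P_Vf$, identifying $\sigma_{t-1}(\xv)$ with the power function $\|k(\cdot,\xv)-P_Vk(\cdot,\xv)\|_{\Hc}$, and concluding via Cauchy--Schwarz and the Pythagorean bound $\|f-P_Vf\|_{\Hc}\le\|f\|_{\Hc}\le B$. Your caveats about ``with probability one'' being a deterministic statement, distinct sample points for invertibility of the Gram matrix, and the trivial case at already-queried points are all accurate.
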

From the above lemma, for Mat\'ern-$\nu$ kernel, it has been derived that maximum variance sampling gives (e.g. see \citep[Sec. 3.4]{Wen05})
\begin{equation}\label{eq:Linfty_nosig}
    \|f(\xv) - \mu_{T/2}(\xv)\|_{L^{\infty}} = O(T^{-\frac{\nu}{d}}).
\end{equation}
Therefore, when the order of $\lambda$ is at most $\Theta(T^{\frac{\nu}{d}})$, which guarantees that $\lambda\|f(\xv) - \mu_{T/2}(\xv)\|_{L^{\infty}}$ is bounded by $O(1)$ according to \eqref{eq:Linfty_nosig}, we can simplify \eqref{eq:var_ub} (with $\sigma=0$) using \eqref{eq:ex} as follows:
\begin{equation*}
     \var_2\Big[\frac{\Rhat}{R}\Big]= O\big(\lambda^2 T^{-1}  \|f(\xv) - \mu_{T/2}(\xv)\|^2_{L^{\infty}}\big)  = O\Big(\lambda^2 T^{-\frac{2\nu}{d}-1}\Big).
\end{equation*}
Hence, by Lemma \ref{lem:eps_var}, we obtain with probability at least $1-\delta_2$ that
\begin{equation*}
     \epsilon = O\Big(\lambda T^{-\frac{\nu}{d}-\frac{1}{2}}\Big).
\end{equation*}
Specifically, the following upper bounds follow:
\begin{itemize}
    \item When $\lambda = \Theta(T^{c})$ with $c\le\frac{\nu}{d}$, $\epsilon = O\big(T^{-\frac{\nu}{d}-\frac{1}{2}+c} \big)$.
    \item When $\lambda = \Theta(\log T)$, $\epsilon = O\big(T^{-\frac{\nu}{d}-\frac{1}{2}}\log T \big)$.
\end{itemize}
We have thus proved Theorem \ref{thm:noiseless_upper}.

\subsection{Noisy Upper Bounds (Proof of Theorem \ref{thm:noisy_upper})}
In the case of noisy observations, we begin with a useful noisy confidence interval presented in the BO literature \citep{Vak21}.  

\begin{lem}\label{lem:fmu_infty}
    {\em (Confidence Bound for Non-Adaptive Sampling \citep{Vak21})}
    For $f\in\mat(B)$ and any $\delta_1>0$, after $T$ noisy non-adaptive observations (i.e., all inputs are chosen before observing any outputs), the following holds with probability at least $1-\delta_1$:
    \begin{equation}
        \|f-\mu_{T/2}\|_{L^{\infty}} = O\bigg(\sqrt{\frac{\gamma_T}{T}} \Big(B+\frac{\sigma}{\xi}\sqrt{d\log T + 2\log\frac{1}{\delta_1})}\bigg).
    \end{equation}
\end{lem}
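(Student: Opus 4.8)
The plan is to exploit the fact that the first batch of Algorithm~\ref{alg:two-phase-nc} selects its queries by the maximum-variance rule, which is \emph{non-adaptive}: since $\sigma_t$ in \eqref{eq:posterior_variance} depends only on $\xv_1,\dots,\xv_t$ and not on the observed values, the whole design $\xv_1,\dots,\xv_{T/2}$ is fixed before any $y_t$ is seen, so the prediction weights may be treated as deterministic. First I would substitute $\yv_{T/2}=\fv+\zv$ (with $\fv=[f(\xv_1),\dots,f(\xv_{T/2})]^T$ and $\zv$ the noise vector) into the posterior mean \eqref{eq:posterior_mean} and split the error into a deterministic bias part and a stochastic noise part:
\begin{equation*}
f(\xv)-\mu_{T/2}(\xv)=\big(f(\xv)-\kv_{T/2}(\xv)^T(\Kv_{T/2}+\xi\Iv)^{-1}\fv\big)-\wv(\xv)^T\zv,
\end{equation*}
where $\wv(\xv)=(\Kv_{T/2}+\xi\Iv)^{-1}\kv_{T/2}(\xv)$ is a fixed weight vector. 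The two pieces can then be controlled separately.

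For the bias term I would use the reproducing property together with Cauchy--Schwarz in the RKHS, exactly as in the noiseless bound of Lemma~\ref{lem:noiseless_conf_bound} but now with the regularized posterior variance, to obtain the pointwise estimate $|f(\xv)-\kv_{T/2}(\xv)^T(\Kv_{T/2}+\xi\Iv)^{-1}\fv|\le B\,\sigma_{T/2}(\xv)$ for every $\xv\in D$. For the noise term, because $\wv(\xv)$ is deterministic, $\wv(\xv)^T\zv\sim\Nc(0,\sigma^2\|\wv(\xv)\|^2)$, so a standard Gaussian tail bound controls it at any single point. The remaining ingredient is the identity for fixed designs $\|\wv(\xv)\|^2\le\sigma_{T/2}^2(\xv)/\xi$, which follows from \eqref{eq:posterior_variance} (and is proved in \citep{Vak21}); this converts the noise fluctuation at each point into a term proportional to $\sigma_{T/2}(\xv)$ times the $\sigma$-dependent prefactor appearing in the statement.

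To pass from pointwise control to the $L^{\infty}$ norm over the continuum $D=[0,1]^d$, I would place a grid of $\Theta(T^d)$ points over $D$, apply a union bound so that the Gaussian tail holds simultaneously at all grid points with probability $1-\delta_1$, and bound the off-grid fluctuation using the Lipschitz continuity of $f$, $\mu_{T/2}$, $\sigma_{T/2}$ and the noise process, all inherited from the smoothness of the Mat\'ern kernel. The grid cardinality $\Theta(T^d)$ is exactly what produces the $\sqrt{d\log T+2\log\frac{1}{\delta_1}}$ factor through $\log(T^d/\delta_1)$. Combining the bias and noise bounds uniformly then gives $\|f-\mu_{T/2}\|_{L^{\infty}}\le\big(B+\tfrac{\sigma}{\xi}\sqrt{d\log T+2\log\frac{1}{\delta_1}}\big)\sup_{\xv\in D}\sigma_{T/2}(\xv)$.

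Finally I would invoke the defining property of maximum-variance sampling, namely that it drives the posterior standard deviation down uniformly, to conclude $\sup_{\xv\in D}\sigma_{T/2}(\xv)=O(\sqrt{\gamma_T/T})$, where $\gamma_T$ is the maximum information gain; this uses the telescoping bound $\sum_t\sigma_{t-1}^2(\xv_t)=O(\gamma_T)$ and the fact that the greedy maximum-variance value upper bounds the entire surface. The main obstacle is precisely this uniformization step: turning one-point Gaussian concentration into a genuine $L^{\infty}$ guarantee requires the covering argument to mesh with the deterministic-weights structure while keeping the discretization/Lipschitz errors of lower order than the main $\sqrt{\gamma_T/T}$ term. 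Since the statement is quoted directly from \citep{Vak21}, the cleanest route in the paper is simply to cite their result, with the sketch above indicating the self-contained argument behind it.
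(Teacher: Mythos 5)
Your proposal is correct and consistent with the paper, which states this lemma as an imported result and provides no proof beyond the citation to \citep{Vak21}. Your sketch --- splitting $f-\mu_{T/2}$ into a deterministic bias bounded by $B\sigma_{T/2}(\xv)$ and a Gaussian noise term with variance controlled via $\|\wv(\xv)\|^2\le\sigma_{T/2}^2(\xv)/\xi$, discretizing $D$ to get the $\sqrt{d\log T+2\log\frac{1}{\delta_1}}$ factor, and then bounding $\sup_{\xv}\sigma_{T/2}(\xv)=O(\sqrt{\gamma_T/T})$ via maximum-variance sampling and the information-gain telescoping sum --- faithfully reconstructs the argument behind the cited result, so citing \citep{Vak21} as you ultimately suggest is exactly what the paper does.
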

For the Mat\'ern-$\nu$ kernel, the maximum information gain $\gamma_T$ is bounded by \citep{Vak21a}
\begin{equation*}
    \gamma_T=O\big(T^{\frac{d}{2\nu +d}}(\log{T})^{\frac{2\nu}{2\nu +d}}\big).
\end{equation*}
Thus, from the above lemma, by choosing $\delta_1=\frac{1}{T^{\alpha}}$ for an arbitrary constant $\alpha > 0$, the $L^{\infty}$-error simplifies to
\begin{equation}\label{eq:fmu_infty}
    \|f-\mu_{T/2}\|_{L^{\infty}} = O\Big(T^{-\frac{\nu}{2\nu+d}}(\log T)^{\frac{\nu}{2\nu+d}} + \sigma T^{-\frac{\nu}{2\nu+d}}(\log T)^{\frac{4\nu+d}{4\nu+2d}}\Big),
\end{equation}
where $B$, $d$, $\xi$, and $\alpha$ are hidden as constants.

\subsubsection{Noisy Relative Error When $\lambda\sigma\to 0$}\label{sec:noisy0}

In the noisy setting, there exits an $e^{\lambda^2\sigma^2}$ term in \eqref{eq:var_ub}, so in order to make the ``$*$'' term of $\exp[*]$ to be non-growing, it is required that $a+c\le 0$ when $\lambda=\Theta(T^c)$ and $\sigma=\Theta(T^a)$, so that $\lambda\sigma=\Theta(1)$.  Thus, when $\lambda\sigma + \lambda\|f-\mu_{T/2}\|_{L^{\infty}}=O(1)$, we can substitute \eqref{eq:fmu_infty} into \eqref{eq:var_ub}, and simplify according to \eqref{eq:ex}.  Finally, with Lemma \ref{lem:eps_var}, the following holds with probability at least $(1-\delta_2)\big(1-\frac{1}{T^c}\big)$:
\begin{equation*}
    \epsilon = O\big(\lambda\sigma T^{-\frac{1}{2}} + \lambda T^{-\frac{1}{2}}\|f-\mu_{T/2}\|_{L^{\infty}} \big) = O\big(\lambda T^{-\frac{\nu}{2\nu+d}-\frac{1}{2}}(\log T)^{\frac{\nu}{2\nu+d}} + \lambda \sigma T^{-\frac{1}{2}}\big),
\end{equation*}
where an additional noisy term $O(\sigma T^{-\frac{\nu}{2\nu+d}-\frac{1}{2}}(\log T)^{\frac{4\nu+d}{4\nu+2d}})$ is absorbed into $O(\cdot)$ since it is dominated by $O(\sigma T^{-\frac{1}{2}})$. Hence, the following upper bounds are established:
\begin{itemize}
    \item When $\lambda=\Theta(T^c)$ with $a+c \le 0$ and $c<\frac{\nu}{2\nu+d}$, the condition $\lambda\sigma + \lambda\|f-\mu_{T/2}\|_{L^{\infty}}=O(1)$ is satisfied by \eqref{eq:fmu_infty}.  Note we exclude $c$ from being exactly $\frac{\nu}{2\nu+d}$, since in that case the remaining $\log T$ term will grow unbounded.  Consequently, we have $\epsilon = O\big(T^{-\frac{\nu}{2\nu+d}-\frac{1}{2}+c}(\log T)^{\frac{\nu}{2\nu+d}} + \sigma T^{-\frac{1}{2}+c}\big)$.
    \item When $\lambda=\Theta(\log T)$ and $a\le 0$, $\epsilon=O\big(T^{-\frac{4\nu+d}{4\nu+2d}}(\log T)^{\frac{3\nu+d}{2\nu+d}} + \sigma T^{-\frac{1}{2}}\log T\big)$.
\end{itemize}
This gives the first part of Theorem \ref{thm:noisy_upper}.

\subsubsection{Noisy Relative Error When $\lambda \sigma \to \infty$}

As hinted in previous sections, the $e^{\lambda^2\sigma^2}$ term caused by Algorithm \ref{alg:two-phase-nc} may be prohibitively large unless $\lambda\sigma\to 0$.  For other choices of $\lambda$ or $\sigma$ such that $\lambda\sigma\to\infty$, we consider the simpler strategy of only utiliizing the intermediate estimate $\Zhat_1$ (now using all $T$ samples instead of only $T/2$).  In this case, the relative error is simply
\begin{align}
    \epsilon &= \frac{\int_D e^{-\lambda \mu_{T}(\xv)}d\xv}{\int_D e^{-\lambda f(\xv)}d\xv} - 1 \nonumber\\
    &\le \frac{\int_D e^{-\lambda f(\xv) + \lambda\|f(\xv) -\mu_T(\xv)\|_{L^{\infty}} } d\xv}{\int_D e^{-\lambda f(\xv)}d\xv} - 1 \nonumber\\
    &= e^{\lambda \|f(\xv) -\mu_T(\xv)\|_{L^{\infty}}} - 1, \label{eq:ub_sig_2}
\end{align}
where the second last step is due to $-\|f(\xv) -\mu_T(\xv)\|_{L^{\infty}}\le \mu_T(\xv) -f(\xv) \le \|f(\xv) -\mu_T(\xv)\|_{L^{\infty}} $.  Continuing  with \eqref{eq:ub_sig_2}, the upper bounds directly follow from \eqref{eq:fmu_infty} if $\lambda\|f-\mu_T\|_{L^{\infty}}=O(1)$.  Therefore, we can use \eqref{eq:ex} and \eqref{eq:fmu_infty} to establish that:
\begin{itemize}
    \item When $\lambda=\Theta(T^{c})$ with $a+c <\frac{\nu}{2\nu+d}$ and $c <\frac{\nu}{2\nu+d}$, the condition $\lambda\|f-\mu_T\|_{L^{\infty}}=O(1)$ is satisfied by \eqref{eq:fmu_infty}, following a similar approach as in the previous subsection.  Then, $\epsilon=O\big(T^{-\frac{\nu}{2\nu+d}+c}(\log T)^{\frac{\nu}{2\nu+d}} + \sigma T^{-\frac{\nu}{2\nu+d}+c}(\log T)^{\frac{4\nu+d}{4\nu+2d}}\big)$.
    \item When $\lambda=\Theta(\log T)$ and $a< \frac{\nu}{2\nu+d}$, $\epsilon=O\big(T^{-\frac{\nu}{2\nu+d}}(\log T)^{\frac{3\nu+d}{2\nu+d}} + \sigma T^{-\frac{\nu}{2\nu+d}}(\log T)^{\frac{8\nu+3d}{4\nu+2d}}\big)$.
\end{itemize}
This gives the second part of Theorem \ref{thm:noisy_upper}.

\section{Further Comparison to Existing Work} \label{sec:comparison}

Here we outline in more detail the relationships and distinctions between our work and that of \citep{Hol23}:
\begin{itemize}
    \item The most significant distinction lies in our consideration of the impact of noise when estimating the normalizing constant. In contrast to their analysis, which is specific to the noiseless setting, our approach is capable of handling both fixed and varying $\sigma^2$.
    \item In terms of information-based complexities, their lower bound in Proposition 10 is primarily suited to large $\lambda$ (or the \emph{optimization regime} in their terminology),\footnote{More precisely, there is no explicit $\lambda$ in \citep{Hol23}, but they attain an equivalent effect by scaling the function.} in which a tight result can be obtained.  For $\lambda\to 0$, or the \emph{integration regime}, their lower bound grows as $\Omega(\log T)$.  We partially address the resulting gaps, and develop proof techniques that can handle both noisy and noiseless lower bounds in a unified manner, encompassing both optimization and integration regimes.
    \item While their results focus on $m$-differentiable functions with $L^\infty$ norm, we focus on Mat\'ern RKHS equipped with $L^2$ norm, which is closely related to GP bandits. In more detail, \citep{Hol23} focuses on the following function class :
    \begin{equation}\label{eq:m_diff}
        \Big\{f\in C^m(D), \|f\|_{C^m}=\sup_{\balpha:|\balpha|\le m} \|\partial^{\balpha} f\|_{L^{\infty}}\le B \Big\},
    \end{equation}
    where $m$ is an {\em integer}, the bold $\balpha$ is a $d$-dimensional multi-index with $|\balpha| = \sum^{d}_{i=1}\alpha_i$, and $\partial^{\balpha} f(\xv)$  is the weak derivative of order $\balpha$.  Meanwhile, the Mat\'ern RKHS is equivalent to the following Sobolev function class $W_2^s$ with a {\em fractional} order $s$, accompanied by an $L^2$ norm:
    \begin{equation}\label{eq:sob_s}
        \Big\{f\in L^2(\RR^d), \|f\|_{W_2^s} = \Big(\int_{\RR^d} (1+\|\xv\|_2^2)^s |\fhat(\bxi)|^2 d\xv \Big)^\frac{1}{2} \le B \Big\},
    \end{equation}
    where $\fhat(\bxi)$ stands for the Fourier transform of $f(\xv)$.  The above function class can be further defined over $D$ with a restriction argument.  To draw a parallel comparison to the integer-ordered case in \eqref{eq:m_diff}, it is possible to set $s$ in \eqref{eq:sob_s} as an integer, leading to a reduction of the following integer-order Sobolev class:
    \begin{equation}\label{eq:sob_s_int}
        \Big\{f\in L^2(D), \|f\|_{W_2^s}=\Big(\sum_{|\balpha|\le s} \|\partial^{\balpha} f\|_{L^2}^2\Big)^{\frac{1}{2}}\le B \Big\}.
    \end{equation}
    Consequently, the two types of results may not be directly transferable.
    \item In estimating the normalizing constant, our algorithm and the one in \citep{Hol23} share a similar two-step idea of trading worse Monte-Carlo computational complexity for a better overall convergence rate, with the distinction being that we utilize a GP surrogate rather than a piecewise constant function approximation.  This two-step idea has been widely used in numerical integration since \citep{Bak59}, and has been further developed in the literature on control functionals \citep{Rip09,Oat17} and BQ \citep{Cai23}.
    \item Regarding the two points mentioned above, it is important to highlight that while \citep{Hol23} has provided a proof of optimality for their algorithm when $\lambda\to\infty$, the same outcome does not hold for a similar algorithm in our case (where an extra $\sqrt{T}$ gap arises). The discrepancy arises from differences in the function classes used in the analysis -- the intermediate step of approximating the function well in $L^{\infty}$ norm is more suited to their function class, and less suited to ours, perhaps because the Mat\'ern RKHS is more closely related to $L^2$ function norms rather than $L^{\infty}$.
    \item \citep{Hol23} raises the open problem of whether there exists a fixed polynomial order $O(T^k)$ runtime (i.e, $k$ is independent of function smoothness $\nu$ and dimension $d$) to achieve near optimal convergence in the noiseless setting.  In our work, we demonstrate that in the noisy setting,\footnote{As previously mentioned, our current analysis based on worst-case noiseless function approximation incurs an additional error of $O(\sqrt{T})$, which circumvents us from achieving optimal convergence rates in the noiseless setting.  In the noisy setting, however, we can still establish tightness in certain cases that the noise-dependent term dominates the final error bound.}  there at least exists asymptotically optimal algorithms with runtime $O(T^3 + T^2 \tau_{\rm ker} + T \tau_{\rm samp})$, where:
    \begin{itemize}
        \item[$\cdot$] $\tau_{\rm ker}$ is the kernel evaluation time;
        \item[$\cdot$] $\tau_{\rm samp}$ is the time taken to perform a single sampling step in Line \ref{line:samp} of Algorithm \ref{alg:two-phase-nc}.
    \end{itemize}
    See, for example, the optimal noisy convergence in Table \ref{tab:results} when $\sigma=\Theta(T^{-\frac{1}{4}})$.  This is primarily attributed to the computational efficiency of a GP surrogate surpasses that of the piecewise constant function, effectively mitigating the curse of dimensionality.
    \item We will see experimentally in Section \ref{sec:exp} that their piecewise-constant approximation technique, while effective in large-budget settings, can be considerably less suited to low-budget settings (smaller $T$) compared to our Algorithm \ref{alg:two-phase-nc}.  That is, our approach harnesses the well-known strong sample efficiency of GP methods.
\end{itemize}

\section{Additional Experimental Results}\label{app:add_exp}
Along with the benchmark functions mentioned in Section \ref{sec:exp_analytic}, we consider an additional 2D function defined by $f(\xv) = -\sin(3\|\xv\|)^2-\xv^T S\xv$ with $S=\big[\begin{smallmatrix}
  1 & 0.5\\
  0.5 & 1
\end{smallmatrix}\big]$.  Estimating the normalizing constant of this function is equivalent to estimating the integration of the Hennig function, i.e., $\int e^{-\sin(3\|\xv\|)^2-\xv^T S\xv} d\xv$.  The results are shown in Figures \ref{fig:add_mlp} and \ref{fig:add_analytic}, where we observe similar findings to those discussed in Section \ref{sec:exp}.

\begin{figure}
    \centering
    \begin{subfigure}{\columnwidth}
        \centering
        \includegraphics[width=\sizecustom]{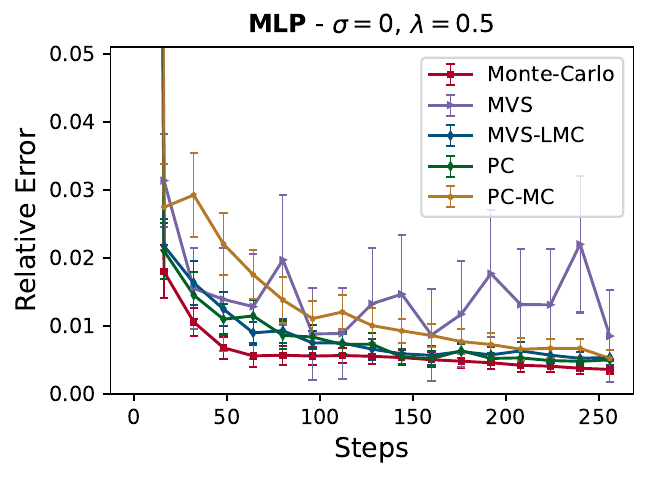}
        \includegraphics[width=\sizecustom]{figs/mlp/mlp-0-5.pdf}
        \includegraphics[width=\sizecustom]{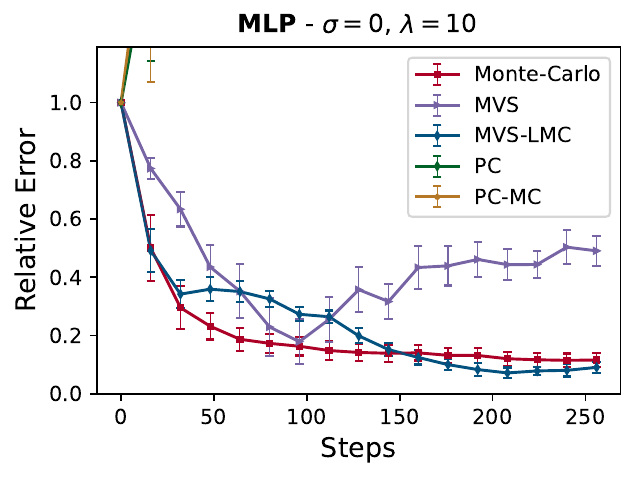}
    \end{subfigure}
    \vskip 0.2in
    \begin{subfigure}{\columnwidth}
        \centering
        \includegraphics[width=\sizecustom]{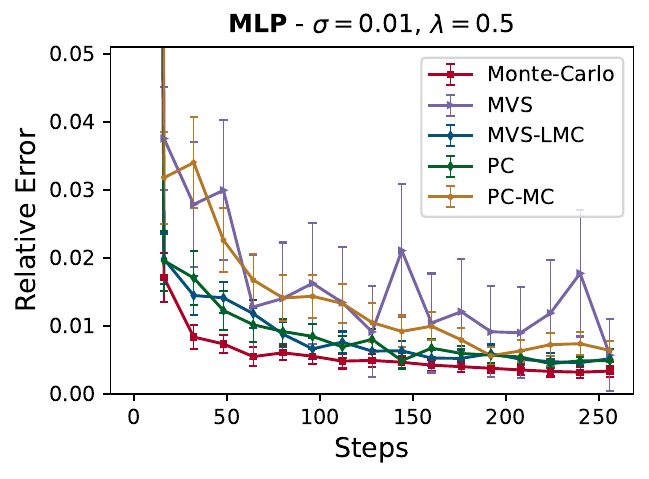}
        \includegraphics[width=\sizecustom]{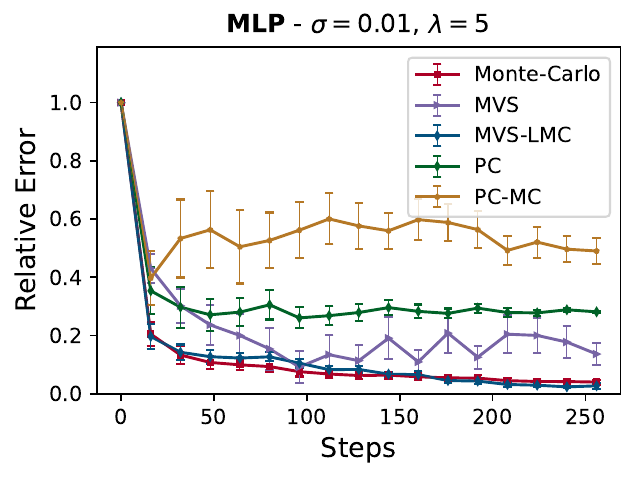}
        \includegraphics[width=\sizecustom]{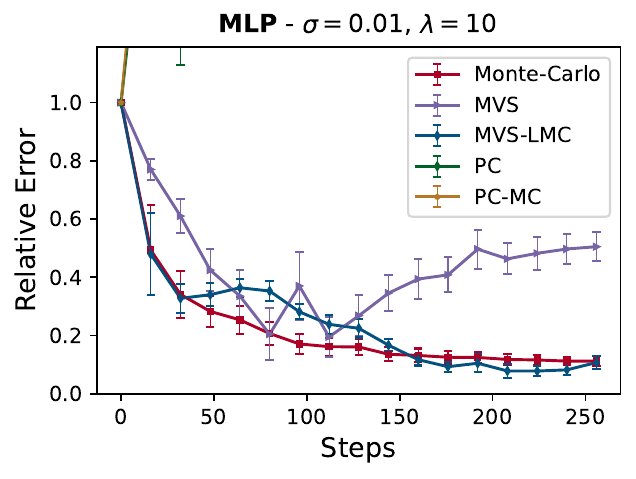}
    \end{subfigure}
    \vskip 0.2in
    \begin{subfigure}{\columnwidth}
        \centering
        \includegraphics[width=\sizecustom]{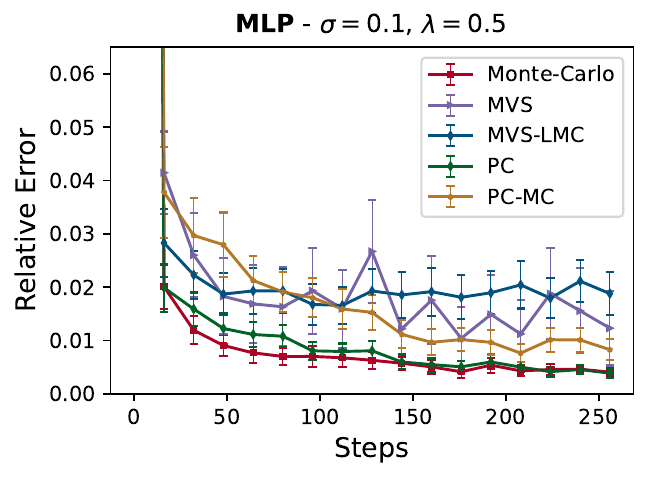}
        \includegraphics[width=\sizecustom]{figs/mlp/mlp-0.1-5.pdf}
        \includegraphics[width=\sizecustom]{figs/mlp/mlp-0.1-10.pdf}
    \end{subfigure}
    \caption{Additional results for MLP. \label{fig:add_mlp}}
\end{figure}

\begin{figure}
    \centering
    \begin{subfigure}{\columnwidth}
        \centering
        \includegraphics[width=\sizecustom]{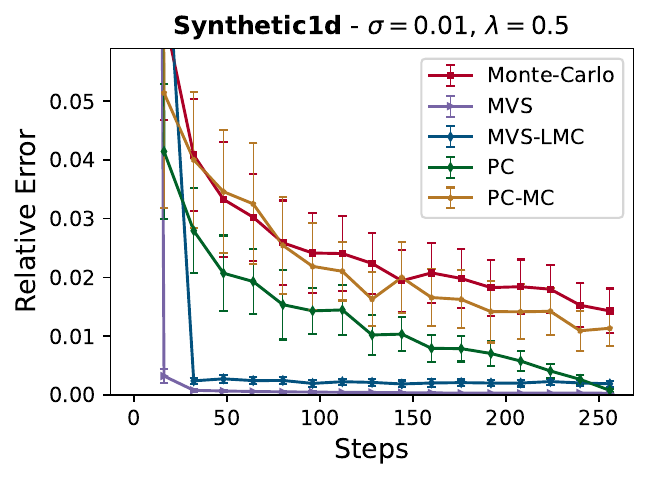}
        \includegraphics[width=\sizecustom]{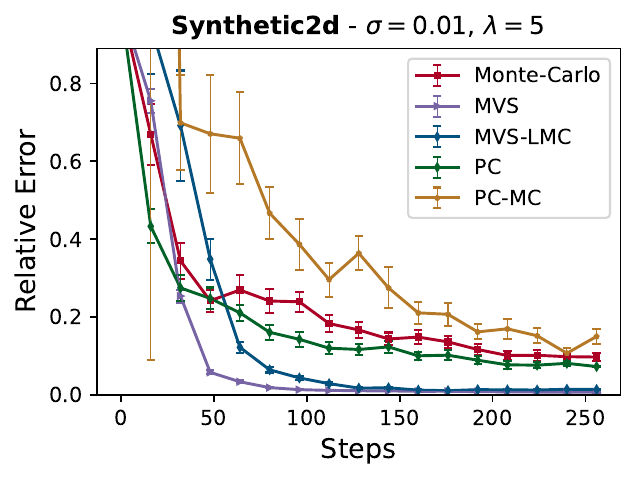}
        \includegraphics[width=\sizecustom]{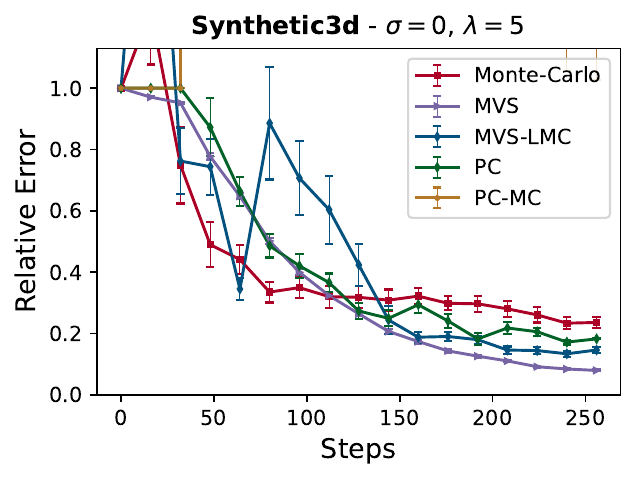}
    \end{subfigure}
    \vskip 0.2in
    \begin{subfigure}{\columnwidth}
        \centering
        \includegraphics[width=\sizecustom]{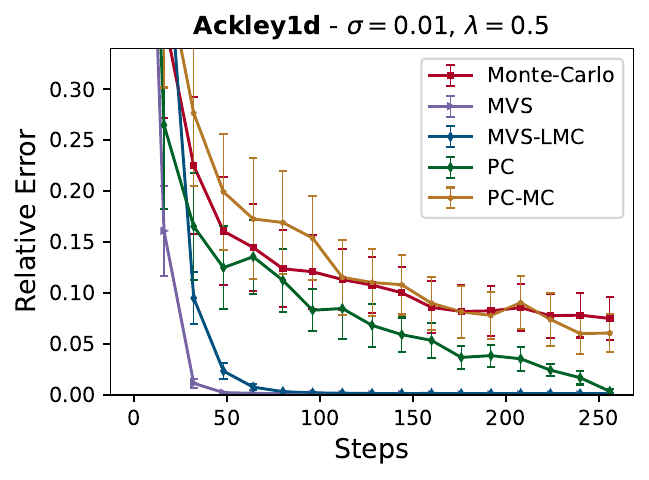}
        \includegraphics[width=\sizecustom]{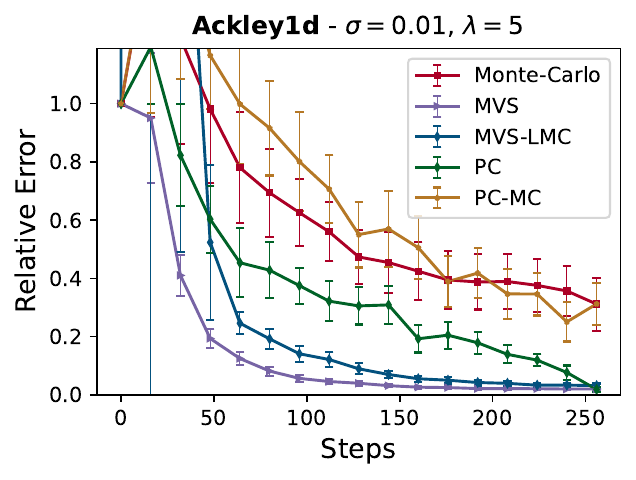}
        \includegraphics[width=\sizecustom]{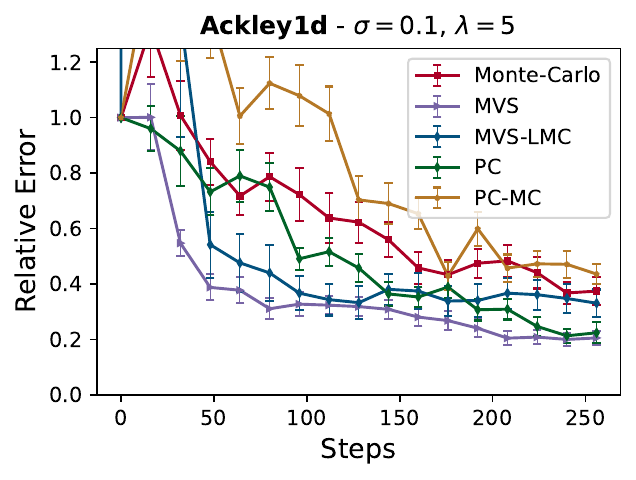}
    \end{subfigure}
    \vskip 0.2in
    \begin{subfigure}{\columnwidth}
        \centering
        \includegraphics[width=\sizecustom]{figs/analytic/zhou2-0.01-0.5.pdf}
        \includegraphics[width=\sizecustom]{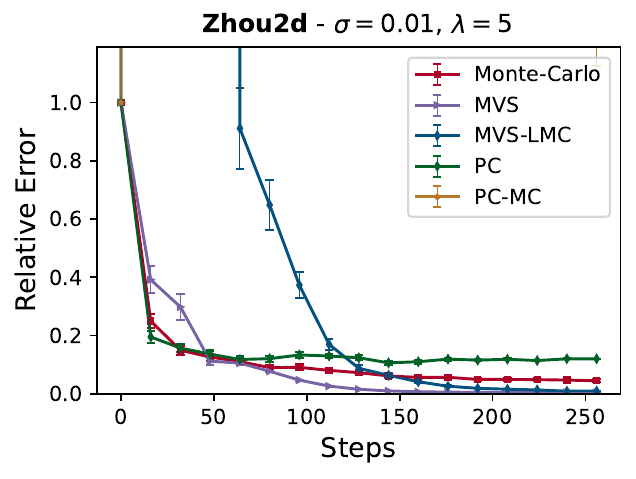}
        \includegraphics[width=\sizecustom]{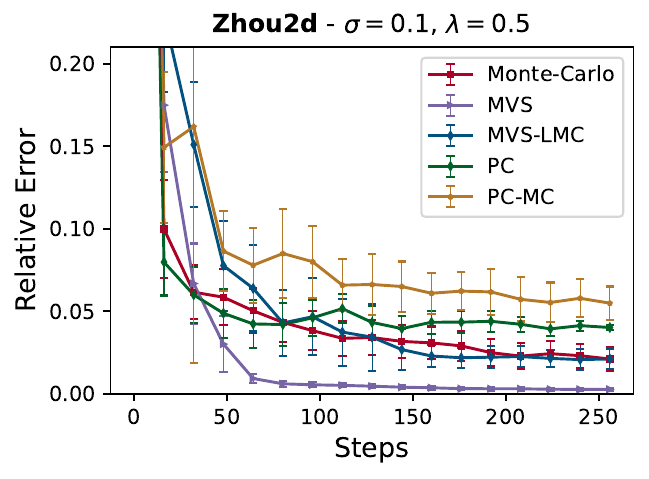}
    \end{subfigure}
    \vskip 0.2in
    \begin{subfigure}{\columnwidth}
        \centering
        \includegraphics[width=\sizecustom]{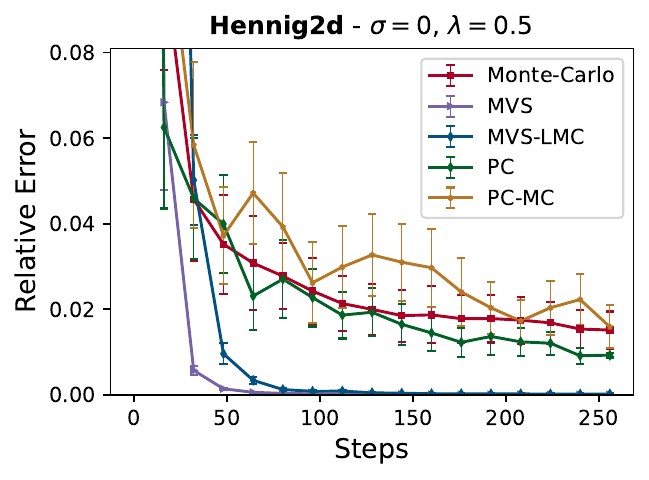}
        \includegraphics[width=\sizecustom]{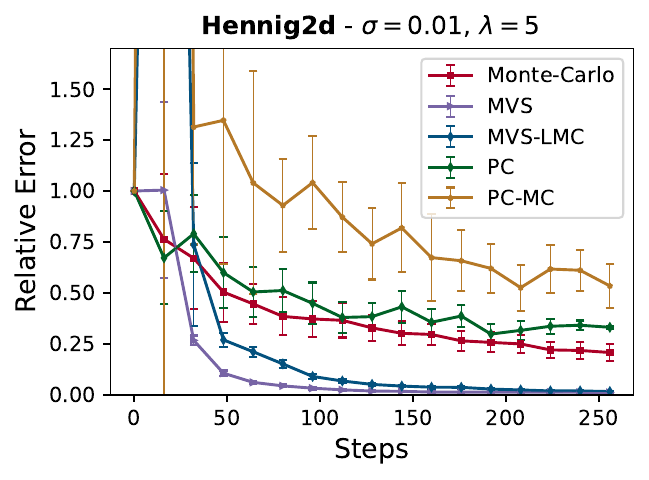}
        \includegraphics[width=\sizecustom]{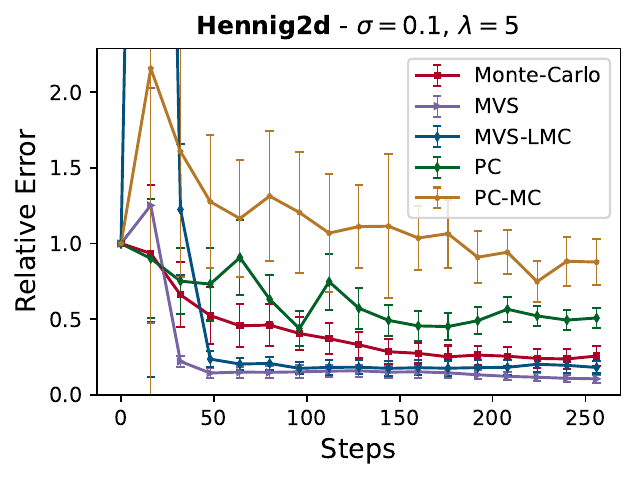}
    \end{subfigure}
    \vskip 0.2in
    \begin{subfigure}{\columnwidth}
        \centering
        \includegraphics[width=\sizecustom]{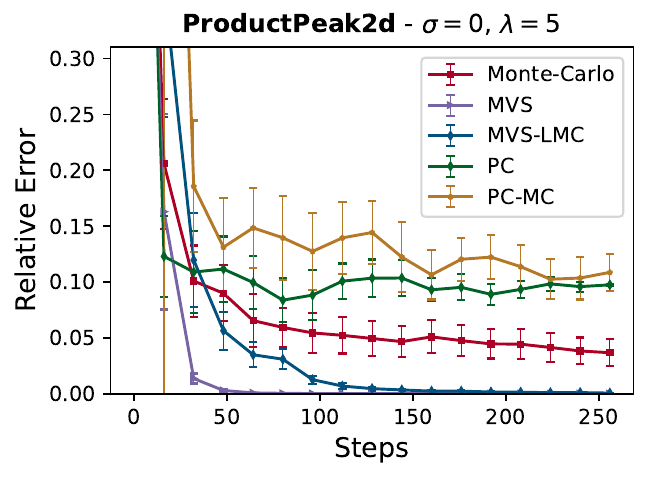}
        \includegraphics[width=\sizecustom]{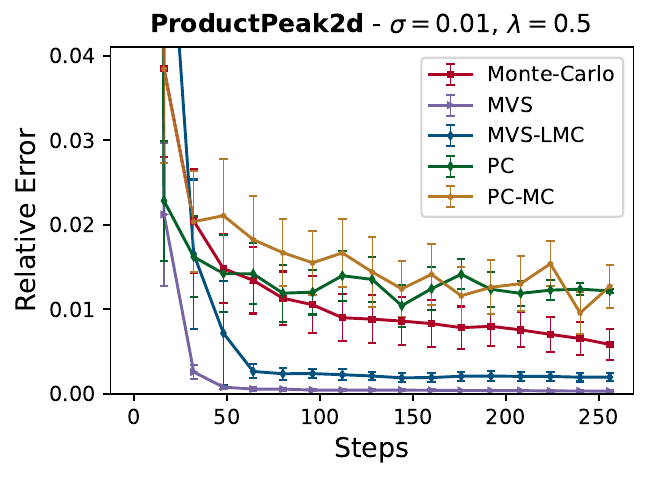}
        \includegraphics[width=\sizecustom]{figs/analytic/prod2-0.1-5.pdf}
    \end{subfigure}
    \caption{Additional results for analytic functions. \label{fig:add_analytic}}
\end{figure}

\end{document}